\documentclass[12pt]{article}

\usepackage{amsmath}
\usepackage{amssymb}
\usepackage{amsthm}
\usepackage{subfigure}
\usepackage{multirow}
\usepackage{hyperref}
\usepackage[numbers,sort&compress]{natbib}

\usepackage{graphicx}
\usepackage{float}

\newcommand{\bs}[1]{\boldsymbol{#1}}
\newcommand{\dive}{\operatorname{div}}
\newtheorem{prop}{Proposition}
\newcommand{\mc}[1]{\mathcal{#1}}
\newcommand{\Fb}[1]{\boldsymbol{\mathfrak{#1}}}
\newcommand{\Mb}[1]{\mathbb{#1}}
\renewcommand{\vec}[1]{\ensuremath{\boldsymbol{#1}}}
\begin{document}

\title{Pre-process for segmentation task with nonlinear diffusion filters}

\author{Javier Sanguino \and Carlos Platero \and Olga Velasco\\[6pt]
\normalsize Health Science Technology Group, Technical University of Madrid,\\
\normalsize Ronda de Valencia 3, 28012, Madrid, Spain.}

\date{Manuscript originally written in January 2017. Submitted to arXiv, 2026.}

\maketitle

\begin{abstract}
This paper deals with the case of using nonlinear diffusion filters to obtain
piecewise constant images as a previous process for segmentation techniques.

We first show an intrinsic formulation for the nonlinear diffusion equation to
provide some design conditions on the diffusion filters. According to this
theoretical framework, we propose a new family of diffusivities; they are
obtained from nonlinear diffusion techniques and are related with backward
diffusion. Their goal is to split the image in closed contours with a
homogenized grey intensity inside and with no blurred edges.

We also prove that our filters satisfy the well-posedness semi-discrete and
full discrete scale-space requirements. This shows that by using semi-implicit
schemes, a forward nonlinear diffusion equation is solved, instead of a
backward nonlinear diffusion equation, connecting with an edge-preserving
process. Under the conditions established for the diffusivity and using a
stopping criterion for the diffusion time, we get piecewise constant images
with a low computational effort.

Finally, we test our filter with real images and we illustrate the effects of
our diffusivity function as a method to get piecewise constant images.

The code is available at \url{https://github.com/cplatero/NonlinearDiffusion}.
\end{abstract}

\noindent\textbf{Keywords:} Nonlinear diffusion, PDE, Cartoon images, Segmentation\\[2pt]
\noindent\textbf{MSC classes:} 68U10 (Image processing), 68T45 (Machine vision and scene understanding), 65M06 (Finite difference methods)

\section{Introduction}
\label{intro}
A typical application of the Computer Vision is the  segmentation process, where initial images show noise, regions without homogeneity in their intensities, weak edges and small artifacts inside of regions of interest.

However,  segmentation techniques  are usually based on the fact that
the regions are piecewise  constant~\cite{RefWorks:47} and the edges have
high slopes. Hence 
segmentation approaches need that the images have to be filtered previously  in
order to homogenize the regions and enhance the edges.

Therefore, given an observed image $u_{0}$ we are interested in finding another image $u$, \emph{close} to $u_{0}$, formed by homogeneous regions and with sharp boundaries. The \emph{piecewise constant images} or \emph{cartoon images}, that we are interested to obtain, should be composed by regions delimited by closed and sharped contours with a constant grey value inside.

The cartoon images arise in some models in image analysis where a relation is assumed
between an observed image $u_{0}$ and the cartoon component $u$ as follows: $u_{0} = u + v$ where $v$ is noise or small scale repeated detail (texture). In this case, both types of patterns (additive noise or texture) can be modeled by oscillatory functions taking both positive and negative values and zero mean~\cite{RefWorks:169}.  These models yield to a total variation minimization problem where $u\in BV$ is a function of Bounded Variation, while $v$ belongs to $G$, a space of oscillating functions which is the dual of the closure in $BV(\mathbb{R}^{2})$ in the Schwartz class~\cite{RefWorks:169}.
In some contexts the $v$ component is important, especially if we are interested in image texture~\cite{RefWorks:169,RefWorks:170,RefWorks:163,RefWorks:165,RefWorks:164}.
On the other hand, some models in a context of image recovery or classical image denoising suggest that the $v$ component models the noise in a total variation minimization framework~\cite{RefWorks:129,RefWorks:36,RefWorks:173,RefWorks:177}.

For image segmentation, the Mumford-Shah model \cite{RefWorks:128} is one of the first image decompositions in texture and cartoon components.
Furthermore, it is also known the connection between the Mumford and Shah model and the Perona-Malik evolution equations~\cite{RefWorks:30} as it is shown in~\cite{RefWorks:132}. We use this fact as a motivation to get a nonlinear diffusion framework in order to obtain a kind of cartoon image
from the initial image.
Thus, in this work we are interested in extracting the cartoon component $u$ from $u_{0}$ eliminating the component $v$ that contains noise and some texture of the initial image.
Our approach is not based on a total variation minimization model, where the functional has two terms: one measure the fidelity to the data and the other is a smoothing term~\cite{RefWorks:129,RefWorks:36,RefWorks:11}.
We only consider a smoothing term, with a nonconvex functional of type
\begin{equation}\label{eq:funcional}
\Psi(u)  = \int_{\Omega} \rho\left(\|\nabla u\|\right)\, d\vec{x}
\end{equation}
We will study, for continuous images the model described by Perona and Malik~\cite{RefWorks:30}. Thus, we begin from the definition of \emph{scalar potential diffusivity function}
\begin{equation}\label{eq:intri_7}
\rho(s) = \frac{1}{2}\int_{0}^{s^{2}} g(t)\, dt
\end{equation}
where $g: \mathbb{R}^{+}\cup \{ 0\} \to \mathbb{R}^{+}$ is initially a
smooth enough positive function called \emph{diffusivity}. Hence we consider the functional given by~\eqref{eq:funcional} and  we minimize it by using the descent gradient method to get the following Euler-Lagrange equation
\begin{equation}\label{eq:intro_1}
\left\{\begin{array}{cll}
\partial_{t} u = \dive\, \left(g(\|\nabla u\|^{2})\, \nabla u\right) & \text{if} \quad (\vec{x},t)\in \Omega \times (0,T)&\\[5pt]
u(\vec{x},0) = u_{0}(\vec{x}) & \text{if} \quad \vec{x} \in \Omega\;&\\[5pt]
\partial_{\vec{n}}u = 0 & \text{if}
\quad (\vec{x},t) \in \partial \Omega \times (0,T]&
\end{array}\right.
\end{equation}
Given an initial image $u_{0}:\Omega
\rightarrow\mathbb{R}$ defined over a bounded domain $\Omega \subset
\mathbb{R}^{s}$ (with $s =1,2$ or $3$) we get an image $u(\vec{x},T)$
as the solution to a nonlinear diffusion equation with initial and
Neumann boundary conditions.

The aim of this paper is to propose  a diffusivity function with an edge-preserving behaviour that  allows us to implement a criterion for establishing a stopping time of the diffusion process. We present a
new and a robust procedure to split  the original image into  piecewise constant regions. In this regard, Black~\emph{et al}~\cite{RefWorks:157} established that nonlinear diffusion can
be seen as a robust estimation procedure getting a piecewise constant image from an observed image. So, our proposal is to present a diffusivity function in order to obtain piecewise constant images using nonlinear diffusion techniques.

It is clear that the piecewise constant images should be related with the edge enhancement or at least an edge-preserving process, so we first propose a theoretical framework for getting these kind of diffusivities.

In this regards, these behaviors are directly related to the non convexity of $\rho$ given by~\eqref{eq:intri_7}~\cite{RefWorks:11,RefWorks:36}. In this case, the problem is ill-posed and might have no solution, and one cannot prove any convergence result. One way
to transform it to a \emph{well-posed} problem is proposed by Catte
\emph{et al}~\cite{RefWorks:51}. In this case the image $u_{0}$ is
convolved with a Gaussian mask of standard deviation $\sigma$ to obtain
$u_{\sigma}$ so they get a pre-smoothing mechanism. This pre-smoothing mechanism can be a drawback in an edge enhancement process.
Nevertheless, discretizations may have a regularizing effect to
Perona-Malik `ill-posed' problem as it is shown by
Weickert~\cite{RefWorks:31}. Using the \emph{Method of Lines} (MOL),
we arrive to a semi-discretization of the problem~\eqref{eq:intro_1}
and we show that our proposed diffusivity satisfies the criterion for
semi-discrete diffusion scale-spaces framework~\cite{RefWorks:130}; this is
sufficient to transform the former partial differential
equation~\eqref{eq:intro_1} in a well-posed system of ordinary
differential equations~\cite{RefWorks:53}.
However, these discrete and iterative solutions tend to the average  grey value of the image. Therefore, it is necessary that we introduce a criterion to stop the diffusion process to facilitate the access to the appropriate images.

Forward-and-Backward (FAB) diffusion model proposed by Gilboa et al \cite{RefWorks:37} give us another research area \cite{wang2007local,welk2009theoretical,Wang} where its diffusivities are positive in certain regions and negative in others in order to combine smoothing and sharpening actions. As a consequence, it seeks an enhancement process instead of an edge-preserving behaviour. This  model has some difficulties to satisfy the well-posedness semi-discrete and full discrete scale-space requirements \cite{welk2009theoretical}. For these reasons,  in this work, we have chosen to use an edge-preserving process where we have the possibility of using semi-implicit time discretizations that allows us to use a larger time steps with the idea of reducing the computational effort.

The outline of the paper is as follows. First, based upon the Perona and Malik model~\eqref{eq:intro_1},  we
present an intrinsic formulation that can be used to set the design conditions for edge enhancement diffusivity
functions for 2 and 3 dimension, in order to get a theoretical framework that we will
connect with a numerical scheme related with an edge-preserving process.
It is also shown that the Perona-Malik function~\cite{RefWorks:30} and some others diffusivities
are  particular cases of these design conditions.
In section~$3$, we propose a diffusivity function with the aim to obtain a piecewise constant image from an observed image and with the feature of providing a stopping time for the diffusion process.
In section~\ref{sec:numerical1D},  we show  that our diffusivity achieves
the necessary conditions for the existence and uniqueness of
solutions using the scale-space framework established by Weickert
and Benhamouda~\cite{RefWorks:31}. For simplicity, we begin applying the Method of Lines in 1D.
Then we  proceed with time discretization. We study the implicit
scheme from the Backward Euler method and how it leads to a nonlinear
algebraic problem. The important point to note here is the
singularity of the iteration matrix.  We analyze the Jacobian
of the system of the differential equation: it is a function of two matrices, one of them containing the effects of the backward
diffusion. To avoid the singularities of the iteration matrix, we propose to use the \emph{tangential method}~\cite{RefWorks:92,RefWorks:93} which
leads to \emph{Picard iteration} and we show the condition for
convergence. We realize that the Picard scheme agrees with the semi-implicit
scheme for one dimension. It also allows to point out that in fact we
do not do backward diffusion but forward diffusion with a positive decreasing diffusivity function that slows down the diffusion connecting with an edge-preserving process. In Section~\ref{sec:numerical2D} we extend
straightforwardly the results obtained in 1D to 2D and 3D; this leads to the \emph{Additive Operator Splitting} (AOS) method~\cite{RefWorks:42,RefWorks:130}.
Finally, in the last section we validate our diffusivity
with a semi-implicit scheme for real images in 2D and abdominal Computer Tomography (\emph{CT}) scans.

\section{Condition for edge enhancing in a nonlinear diffusion process}
\label{sec:1}
In this section,
we establish the condition that the diffusivity $g$ should satisfy in order to get an edge enhancement process with equation~\eqref{eq:intro_1}.

\subsection{Intrinsic formulation for nonlinear diffusivity}
\label{sec:2}
According to the expression of $\rho$ given in~\eqref{eq:intri_7} we get: $\rho'(s) = g(s^{2})\, s$. Then we
define the \emph{diffusion flow} as:
\begin{equation}\label{eq:intri_1}
\bs{F} = g(\|\nabla u\|^{2}) \, \nabla u
\end{equation}
and $\bs{\eta}=\nabla\, u/\|\nabla u\|$ with $\|\nabla u\|\neq 0$. Using $\bs{\eta}$ and adding orthogonal and unitary vectors we make positive oriented systems $\widehat{B} =
\{\bs{\xi},\bs{\eta}\}$ in $\mathbb{R}^{2}$  and $\widehat{B}
=\{\bs{\xi},\bs{\zeta},\bs{\eta} \}$ for
$\mathbb{R}^{3}$.

Now, if $\bs{I}$ is the identity tensor, `$Tr$' is the Trace operator
and `$\bs{:}$' is the tensor scalar product ($\mathbf{A}\boldsymbol{:} \mathbf{B} = Trace(\mathbf{A}^{t}\cdot \mathbf{B})$, see Gurtin~\cite{RefWorks:59}) we can write
\begin{equation*}
\begin{split}
Tr(\nabla \bs{F}) &=\bs{I} \bs{:}  \nabla \bs{F} =  \dive\, \left(g(\|\nabla u\|^{2})\, \nabla u\right)\\
&=\underbrace{\nabla
g(\|\nabla u\|^{2})\bs{\cdot} \nabla u}_{\text{Transport}}
+\underbrace{g(\|\nabla u\|^{2})\,\Delta u}_{\text{Diffusion}}
\end{split}
\end{equation*}
However, we are interested
in a intrinsic expression that shows, how the diffusivity $g$ affects the differential operator in equation~\eqref{eq:intro_1}. This is done as follows:
\begin{prop}\label{prop:intrinsic}
If $u$ and $g$ are smooth enough then
\begin{equation}\label{eq:intri_11}
\dive\, \left(g(\|\nabla u\|^{2})\, \nabla u\right) =
\Big[g(\|\nabla u\|^{2})\bs{I} + 2\,g'(\|\nabla u\|^{2}) \,\nabla u
\otimes \nabla u\Big]\bs{:}\nabla(\nabla u)
\end{equation}
where the size of identity tensor depends on space dimension.
\end{prop}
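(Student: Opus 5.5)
The identity follows by applying the product rule to the flow $\bs{F}$ of \eqref{eq:intri_1} and then rewriting each piece as a contraction against the Hessian $\nabla(\nabla u)$. Start from the decomposition already recorded above,
\[
\dive\,\bs{F} = \nabla g(\|\nabla u\|^{2})\bs{\cdot}\nabla u + g(\|\nabla u\|^{2})\,\Delta u .
\]
The diffusion term is immediate. Since $\Delta u = Tr(\nabla(\nabla u)) = \bs{I}\bs{:}\nabla(\nabla u)$, we get $g(\|\nabla u\|^{2})\Delta u = g(\|\nabla u\|^{2})\,\bs{I}\bs{:}\nabla(\nabla u)$. This works in any dimension, with $\bs{I}$ the identity of the ambient space.

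For the transport term, use the chain rule: $\nabla g(\|\nabla u\|^{2}) = g'(\|\nabla u\|^{2})\,\nabla(\|\nabla u\|^{2})$. The key computation is
\[
\nabla(\|\nabla u\|^{2}) = 2\,\nabla(\nabla u)^{t}\,\nabla u ,
\]
which in components reads $\partial_i(\partial_k u\,\partial_k u) = 2\,\partial_k u\,\partial_{ik}u$. Here we need $u$ to be $C^2$ so that the Hessian is symmetric and the transpose can be dropped. Then
\[
\nabla g\bs{\cdot}\nabla u = 2g'\,\partial_k u\,\partial_{ik}u\,\partial_i u = 2g'\,(\nabla u\otimes\nabla u)\bs{:}\nabla(\nabla u).
\]
This uses the definition $\mathbf{A}\bs{:}\mathbf{B}=Tr(\mathbf{A}^{t}\mathbf{B})=A_{ik}B_{ik}$ together with $(\nabla u\otimes\nabla u)_{ik}=\partial_i u\,\partial_k u$.

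Adding the two pieces and factoring out the common contraction with $\nabla(\nabla u)$ gives \eqref{eq:intri_11}. The only delicate point is the index and transpose bookkeeping in the tensor product. This is exactly where the smoothness hypotheses enter: $g\in C^1$ so that $g'$ exists, and $u\in C^2$ so that $\nabla(\nabla u)$ is symmetric. No restriction $\|\nabla u\|\neq 0$ is needed for the identity itself; it only matters for the later rewriting in the intrinsic frame $\{\bs{\xi},\bs{\eta}\}$ or $\{\bs{\xi},\bs{\zeta},\bs{\eta}\}$.
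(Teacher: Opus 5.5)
Your proof is correct and follows the same route as the paper: the product rule on $\bs{F}$, the chain rule $\nabla g(\|\nabla u\|^{2}) = 2g'(\|\nabla u\|^{2})\,\nabla(\nabla u)[\nabla u]$ using symmetry of the Hessian, and then recognizing the two pieces as contractions of $\nabla(\nabla u)$ with $\bs{I}$ and with $\nabla u\otimes\nabla u$. The only difference is bookkeeping: the paper builds the full tensor $\nabla\bs{F}=\nabla u\otimes\nabla g+g\,\nabla(\nabla u)$ and takes its trace at the end, whereas you start from the already-traced transport/diffusion split and check the contraction directly in index notation.
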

\begin{proof}
We will use some basic tensor properties~\cite{RefWorks:59}.
From~\eqref{eq:intri_1}, we get
\begin{equation*}
\nabla \bs{F} =\nabla u\otimes \nabla g(\|\nabla u\|^{2})+g(\|\nabla
u\|^{2})\,\nabla(\nabla u)
\end{equation*}
Using the properties of the \emph{Trace}
operator, we get
\begin{equation*}
Tr (\nabla \bs{F}) = Tr(\,\nabla u\otimes \nabla g(\|\nabla
u\|^{2})\,) + Tr(\,g(\|\nabla u\|^{2})\,\nabla(\nabla u)\,)
\end{equation*}
and we know that
$$\nabla g(\|\nabla u\|^{2}) = 2g'(\|\nabla u\|^{2})\nabla(\nabla u)[\nabla u]$$
and recalling the symmetry of $\nabla(\nabla u)$, we have
\begin{equation*}
\begin{split}
\nabla u\otimes \nabla g(\|\nabla u\|^{2}) &=\nabla u\otimes
2g'(\|\nabla u\|^{2})\nabla (\nabla u) [\nabla u]\\
&=2\,g'(\|\nabla u\|^{2})\left\{ \nabla u\otimes \nabla (\nabla u)
[\nabla u]\right\}\\
&=2g'(\|\nabla u\|^{2})\left(\nabla u\otimes  \nabla u\right) \nabla
(\nabla u).
\end{split}
\end{equation*}
For the last equality, we have followed~\cite[p. 9; 6(b)]{RefWorks:59}. Now taking traces in the last result we get
\begin{equation*}
\begin{split}
Tr (\,\nabla u\otimes \nabla g(\|\nabla u\|^{2})\,) &=2g'(\|\nabla
u\|^{2})Tr(\,\left(\nabla u\otimes  \nabla u\right)
\nabla (\nabla u) \,)\\
&=2g'(\|\nabla u\|^{2})\left(\nabla u\otimes \nabla
u\right)\bs{:}\nabla (\nabla u)
\end{split}
\end{equation*}
On the other hand,
$$Tr(\,g(\|\nabla u\|^{2})\,\nabla(\nabla u)\,) = g(\|\nabla u\|^{2})\, \bs{I}\bs{:}\nabla (\nabla u)$$
Therefore
\begin{equation*}
Tr (\nabla \bs{F}) =\Big[g(\|\nabla u\|^{2})\bs{I} + 2\,g'(\|\nabla u\|^{2}) \,\nabla u
\otimes \nabla u\Big]\bs{:}\nabla(\nabla u)
\end{equation*}
so we arrive to the intrinsic expression~\eqref{eq:intri_11}. \\\qed
\end{proof}

We note that the Laplacian operator $\Delta$ is intrinsic respect any basis.
For
example, it is easy to check that for 3D, we have
\begin{equation}
\begin{split}
\Delta_{3} u &= (\bs{\xi}\otimes \bs{\xi} +
\bs{\zeta}\otimes\bs{\zeta}+\bs{\eta}\otimes\bs{\eta})\bs{:}\nabla
(\nabla u)_{\widehat{B}} \\
&=(\vec{i}\otimes \vec{i} +
\vec{j}\otimes\vec{j}+\vec{k}\otimes\vec{k})\bs{:}\nabla (\nabla
u)_{B}
\end{split}
\end{equation}
where $\widehat{B}$ is the orthonormal basis previously mentioned (see equation~\eqref{eq:intri_1}).
Then,  if
we use the  orthonormal basis $\widehat{B}$ above, we obtain
\begin{equation}\label{eq:intri_3}
\partial_{t}\,u(\vec{x},t) =  \left[g(\|\nabla u\|^{2})\bs{I} + 2\,g'(\|\nabla
u\|^{2})\|\nabla u\|^{2} \,\bs{\eta} \otimes \bs{\eta}
\right]\bs{:}\nabla(\nabla
u)_{\widehat{B}}
\end{equation}
and we get,
\begin{equation*}
\begin{split}
\partial_{t}\,u(\vec{x},t) &= g(\|\nabla u\|^{2})\,u_{\bs{\xi\xi}} +
g(\|\nabla u\|^{2})\,
u_{\bs{\zeta\zeta}}\\
&+\bigg[g(\|\nabla\,u\|^{2})+2\,g'(\|\nabla
u\|^{2})\|\nabla u\|^{2} \bigg]\,u_{\bs{\eta\eta}}.
\end{split}
\end{equation*}
where $u_{\bs{\xi\xi}}, u_{\bs{\zeta\zeta}}, u_{\bs{\xi\xi}}$ are the elements of the main diagonal of $\nabla(\nabla
	u)_{\widehat{B}}$.
If $\rho$ is given by~\eqref{eq:intri_7} with $s = \|\nabla u\|$ then:
\begin{subequations}
\begin{eqnarray}
\rho'\left(\|\nabla u\|\right) &=& g\left(\|\nabla u\|^{2}\right) \, \|\nabla u\| \\
\rho''\left(\|\nabla u\|\right) &=& g\left(\|\nabla\,u\|^{2}\right)+2\,g'\left(\|\nabla u\|^{2}\right)\,\|\nabla u\|^{2} \label{eq:9}
\end{eqnarray}
\end{subequations}
then, we have another expression for $\partial_{t}\,u(\vec{x},t)$
\begin{equation}\label{eq:intri_13}
\partial_{t}\,u(\vec{x},t) = \frac{\rho'(\|\nabla u\|)}{\|\nabla u\|}\,
u_{\bs{\xi\xi}}+ \frac{\rho'(\|\nabla u\|)}{\|\nabla u\|}\,
u_{\bs{\zeta\zeta}} +\rho''(\|\nabla u\|)\,u_{\bs{\eta\eta}}
\end{equation}
with $\|\nabla u\| \neq 0$. Analogous results can be obtained for $2D$ and
$1D$ as particular cases.

An interesting insight into nonlinear diffusion filtering may also
be gained by looking at its relation with energy minimization.
If we consider the functional  given by~\eqref{eq:funcional},
with $\rho$ given by~\eqref{eq:intri_7} with $s = \|\nabla u\|$, it can be deduced that the second variation is
formally equal to:
\begin{equation}\label{eq:intri_5}
\delta^{2}\Psi(u) \equiv 2\,g'(\|\nabla u\|^{2})(\nabla u \otimes
\nabla u)  + g(\|\nabla u\|^{2})\bs{I}
\end{equation}
which it is related with the first operator of the right member of the equation~\eqref{eq:intri_11}. If $\rho$ is strictly convex, the quadratic form~\eqref{eq:intri_5}
is positive, hence the potential $\Psi$ is convex, so it has exactly
one minimum.
Nevertheless, we are interested in the case when $\rho$ is not convex. In fact, to get images that are piecewise constant and to obtain clear and sharp edges it is necessary to increase the contrast between different regions. That is why this process should be related to an enhancement process. Thus, the condition imposed
(see Aubert and  Kornprobst~\cite[p. 122]{RefWorks:11})
\begin{equation}\label{eq:8}
\rho''(\|\nabla u\|) = g(\|\nabla u\|^{2})+2\,g'(\|\nabla u\|^{2})\|\nabla u\|^{2}<0
\end{equation}
As mentioned before, this makes $\rho$ not convex. Therefore it is not guaranteed the existence of solutions of equation~\eqref{eq:intro_1}.

\subsection{Conditions for diffusivity functions}
We propose to use the result in~\eqref{eq:8} as the condition for edge enhancement diffusivity
functions~\cite{RefWorks:11}. To do this, observe that \eqref{eq:8} can be written  as
\begin{equation}\label{eq:cond_1}
\frac{2\,g'(s^{2})}{g(s^{2})}\,s^{2}
 <-1 \iff \frac{f'(s)}{f(s)}\,s < -1,
\end{equation}
where $f(s) = g(s^{2})$, which give us conditions that sho\-uld be verified by the enhancement
diffusivity functions. One way to impose this condition is to get $f(s)$ as the solution of the differential equation
\footnote{Another possibility is to consider $g(s^{2})= -\frac{s^{2}}{\tau^{2}}$ that leads to the Perona-Malik function $f(s) = e^{-s^{2}/2\tau^{2}}$}
\begin{equation*}
\frac{f'(s)}{f(s)}\,s = -p < -1
\end{equation*}
where $p>1$ and $s\neq 0$. The solution is easily obtained: $f(s) = \frac{1}{s^{p}}$ and thus,
\begin{equation}\label{eq:condition_2}
 g(s^{2}) = \frac{1}{(s^{2})^{p/2}}.
\end{equation}
Thus, based on \eqref{eq:condition_2}, we define the function called \emph{enhancement diffusivity
function}:
\begin{equation}\label{eq:condition_3}
g(r) = \frac{1}{r^{p/2}}
\end{equation}
This kind of diffusivity has been
already studied for a continuous model in the image denoising context using techniques of the porous media equations~\cite{RefWorks:56}.

\section{Pre-processing  function for segmentation task}
For the diffusivity, given by~\eqref{eq:condition_2}, it is easy to
check that
\begin{equation*}
\rho''(s) = g(s^{2}) +2\cdot g'(s^{2})\cdot s^{2} =
\frac{1}{(s^{2})^{p/2}}\,(1-p)<0 \, \forall \, p>1.
\end{equation*}

We now focus our attention in this kind of functions because with them we get
backward diffusion if $p>1$. A regularized version of this diffusivity is
\begin{equation}\label{eq:condition_4}
g(s^{2}) = \frac{1}{(s^{2} + \varepsilon^{2})^{p/2}},
\end{equation}
with $\varepsilon >0$ a fix and small constant introduced for numerical purposes. A parameter as $\varepsilon$ has been used before in the
context of denoising process, when the goal was to reduce smoothing
across the edges yielding to the Edge Enhancing Flow~\cite{RefWorks:53}; furthermore, with $p=2$, it
leads to the Balanced Forward-Backward Diffusion~\cite{RefWorks:35}.
It is clear that if $\varepsilon$ is small, the
function given in \eqref{eq:condition_4} can take large values when $\|\nabla
u\|\approx 0$. Hence, this regularization
presents unreliable results for large values of time steps $\Delta t$ when it is implemented.

The goal is to get functions with a behaviour similar to the \emph{bounded step function} $B(s)$
(see figure~\ref{fig:pulso}).
\begin{figure}[ht]
\begin{center}
\scalebox{0.7}{\includegraphics{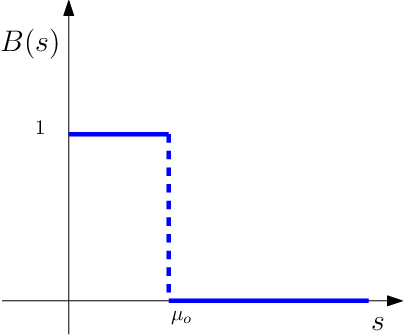}}
\caption{\label{fig:pulso} Bounded step function}
\end{center}
\end{figure}
In this case, we set linear diffusion for slopes less than
$\mu_{0}$ and no diffusion otherwise. This will allow us to establish a stopping criterion based on the \emph{setting time} for the linear diffusion. Therefore we can select the piecewise constant regions by changing the parameter $\mu_{0}$. The disadvantage of this function is that it is not continuous.

For this purpose, using the function $g(t)$  in \eqref{eq:intri_7} and its derivative, the enhancement diffusivity function gi\-ven by \eqref{eq:condition_3} and the bounded step function $B(s)$, we propose the following  diffusivity function
\begin{equation}\label{eq:4}
g_{a}(r) = \left\{\begin{array}{cl}
1 & \text{if $0\le r<\gamma$}\\[4pt]
\displaystyle \left(\frac{ \gamma }{r} \right)^{p/2}& \text{if $\gamma \le r$},
\end{array}\right.
\end{equation}
where $p>1$ is the diffusivity parameter
in~\eqref{eq:condition_3}, and $\gamma$ is a threshold parameter that allows to get linear diffusion when $r$
is less than $\gamma$, and backward diffusion when $r$ is
bigger than $\gamma$. Let us observe that $\mu_{0}$ is related to $\gamma$ as $\sqrt{\gamma} = \mu_{0}$.

In this way this diffusivity looks like the bounded step function $B(s)$
when $\mu_{0} = \sqrt{\gamma}$ and $p \to \infty$,  as  it can be seen in Fig.~\ref{subfig:F_HM}. It can also be seen that $g_{a}$ is
continuous and differentiable at any point in $[0,+\infty)$, except at $
r= \gamma$. But its derivative is bounded by $-\frac{p}{2}\, \gamma^{-1}$, so we can extend it to $r =
\gamma$ by defining $g'_{a}(\gamma) \equiv \lim_{x\to \gamma^{+}}
g'_{a}(x)$. Hence, it takes the form
\begin{equation}\label{eq:5}
\widehat{g_{a}'}(r) = \left\{\begin{array}{cl}
0 & \text{if $r<\gamma$}\\[7pt]
\displaystyle -\frac{p}{2}\,\frac{ \gamma^{p/2} }{r^{p/2+1}} &
\text{if $\gamma \le r$},
\end{array}\right.
\end{equation}
and therefore $g'_{a}(r) \stackrel{\text{a.e.}}{=} \widehat{g_{a}'}(r)$
(\emph{a.e.}: almost everywhere) (see Fig.~\ref{subfig:FD_HM}). This extension is not particulary
restrictive, from now on we will work with the function
$\widehat{g'_{a}}$ as the derivative of $g_{a}$ and,  accordingly,  we will use
the notation $g'_{a}$ instead of $\widehat{g'_{a}}$.
\begin{figure}[H]
\centering
\subfigure[$g_{a}$]{\includegraphics[width=0.47\linewidth]{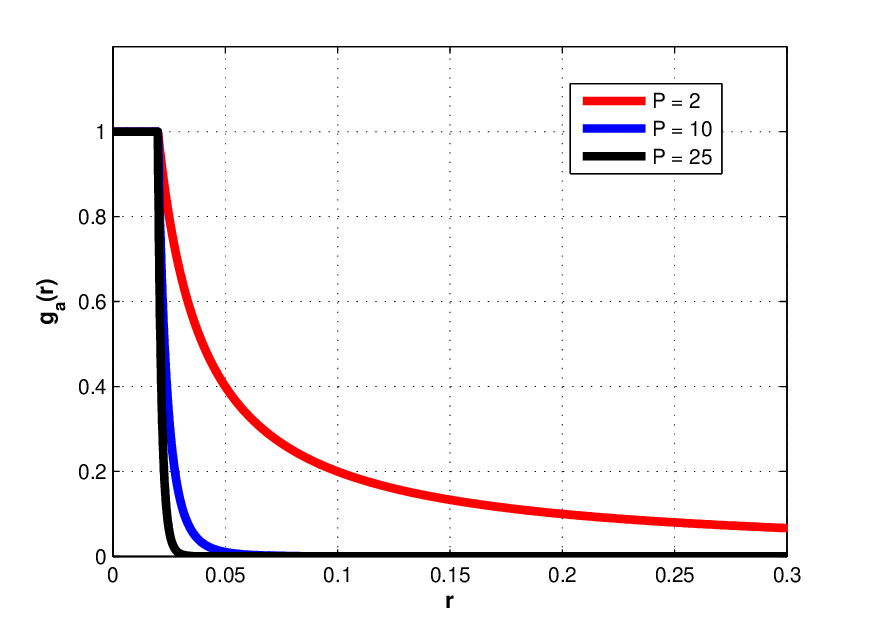}\label{subfig:F_HM}}
\hfill
\subfigure[$g'_{a}$]{\includegraphics[width=0.47\linewidth]{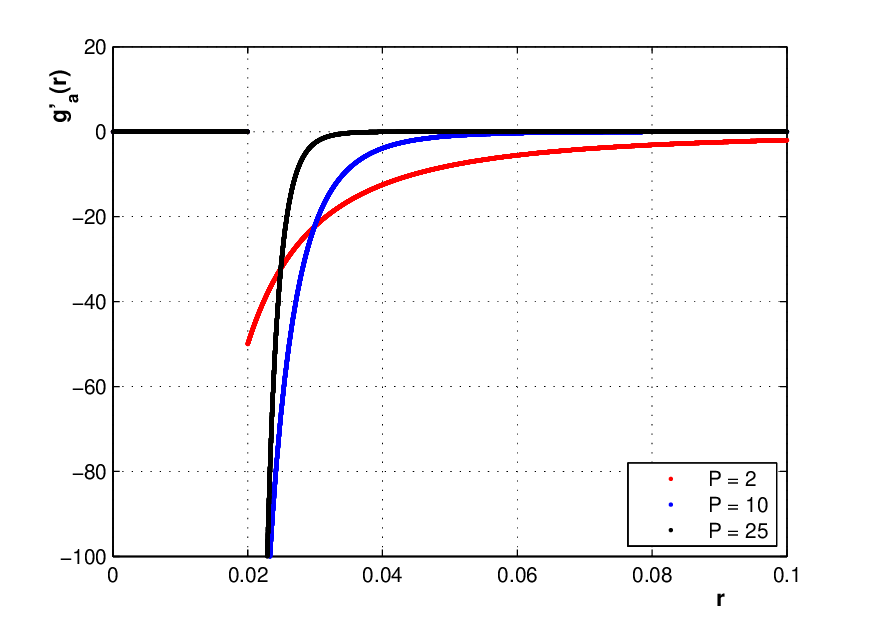}\label{subfig:FD_HM}}
\caption{\label{fig:pulso2} Graphics of the proposed diffusivity for different values of $p$ (\emph{e.g.} $\gamma = 0.02$).}
\end{figure}
We also note that $g_{a}$ has the Lipschitz property.
\begin{prop}\label{prop:1}
The function $g_{a}$ given by~\eqref{eq:4} is Lipschitz continuous in
$[0,+\infty)$ ({i.e.} $g_{a}\in \mc{C}^{0,1}([0,+\infty))$)
\end{prop}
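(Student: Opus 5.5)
The plan is to prove that $g_{a}$ is globally Lipschitz with constant $L=\frac{p}{2}\,\gamma^{-1}$, the bound on $|g'_{a}|$ already noted before the statement. I will work piece by piece and then glue the pieces together at $r=\gamma$.

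First, I check that $g_{a}$ is continuous at $r=\gamma$: both branches of~\eqref{eq:4} give $g_{a}(\gamma)=1$, since $(\gamma/\gamma)^{p/2}=1$. On $[0,\gamma)$ the function is constant, so for any $r,s$ there we have $|g_{a}(r)-g_{a}(s)|=0$.

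On $[\gamma,+\infty)$ the function is $C^{1}$. Its derivative~\eqref{eq:5} satisfies
\begin{equation*}
|g'_{a}(r)|=\frac{p}{2}\,\frac{\gamma^{p/2}}{r^{p/2+1}}\le \frac{p}{2}\,\gamma^{-1},
\end{equation*}
because $r\ge\gamma$ and $p/2+1>0$, so the expression is decreasing in $r$. The mean value theorem then gives $|g_{a}(r)-g_{a}(s)|\le L|r-s|$ for $r,s\ge\gamma$.

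The only step needing care is the mixed case $0\le s<\gamma\le r$. Here I insert the point $\gamma$:
\begin{equation*}
|g_{a}(r)-g_{a}(s)|\le|g_{a}(r)-g_{a}(\gamma)|+|g_{a}(\gamma)-g_{a}(s)|\le L(r-\gamma)+0\le L(r-s).
\end{equation*}
The middle term vanishes by continuity at $\gamma$ together with the constancy of $g_{a}$ on $[0,\gamma)$. Combining the three cases yields $g_{a}\in\mc{C}^{0,1}([0,+\infty))$ with constant $\frac{p}{2\gamma}$.

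Alternatively, one can observe that $g_{a}$ is absolutely continuous with a.e.\ derivative $\widehat{g'_{a}}$ bounded by $L$, and write $g_{a}(r)-g_{a}(s)=\int_{s}^{r}\widehat{g'_{a}}$. I expect no genuine obstacle; the only point to state explicitly is continuity at the kink $r=\gamma$.
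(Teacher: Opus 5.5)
Your proof is correct and follows the paper's argument. You use the same three-case split: both points in $[0,\gamma]$, both in $[\gamma,+\infty)$ handled by the mean value theorem, and the mixed case handled by inserting $\gamma$ and using $r-\gamma\le r-s$, with the same constant $L=\frac{p}{2}\gamma^{-1}$. You are slightly more explicit than the paper in checking $g_{a}(\gamma)=1$ and in noting that the term $|g_{a}(\gamma)-g_{a}(s)|$ vanishes, which the paper uses without comment.
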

\begin{proof}
We have to show that
\begin{equation}\label{eq:lipschitz}
|g_{a}(r_{2}) - g_{a}(r_{1})|\le L\,|r_{2} - r_{1}| \quad \forall r_{1}, r_{2} \in [0,+\infty)
\end{equation}
with $r_{1}<r_{2}$.
The proof will be divided into three steps: \emph{(a)} $r_{1},r_{2}\in [0,\gamma]$, \emph{(b)} $r_{1},r_{2}\in [\gamma, +\infty)$ and \emph{(c)} $r_{1}<r_{2}$ with $r_{1} \in [0, \gamma)$ and $r_{2}\in (\gamma ,\infty)$.
The proofs of the first two steps, are immediate using the Mean Value Theorem. We now analyze the third case.
Let us assume  that $r_{1}<r_{2}$ with $r_{1} \in [0, \gamma)$ and
$r_{2}\in (\gamma ,\infty)$. According to the results for \emph{(a)}, \emph{(b)} and using that
$|r_{2}-\gamma| \le |r_{2} - r_{1}|$ we have that
\begin{equation*}
\begin{split}
|g_{a}(r_{2}) - g_{a}(r_{1})| &= |g_{a}(r_{2})-g_{a}(\gamma) + g_{a}(\gamma) - g_{a}(r_{1})|\\
&\le |g_{a}(r_{2})-g_{a}(\gamma)| + |g_{a}(\gamma) - g_{a}(r_{1})| \\
&\le L \,|r_{2} - \gamma| \le L \,|r_{2} - r_{1}|
\end{split}
\end{equation*}
It may be concluded that
$\forall\, r_{1}, r_{2} \in [0,\infty)$,  with $r_{1}<r_{2}$, equation~\eqref{eq:lipschitz} is satisfied with $L =\frac{p}{2}\, \gamma^{-1}$. \\\qed
\end{proof}

From now on, to shorten and clarify the notation, we will write $g$ instead of $g_{a}$ for the function in \eqref{eq:4}.

\section{Numerical approximation for 1D}\label{sec:numerical1D}

\subsection{Semi-discretization procedure for 1D}
We begin  using the MOL (Method of
Lines)~\cite{RefWorks:14} for the equation~\eqref{eq:intro_1} in 1D
with the diffusivity $g$ given by~\eqref{eq:4}. Therefore, we follow a spatial
semi-discretization procedure to convert a partial differential equation into a system of
coupled ordinary differential equations. For simplicity,  we will use a uniform grid, with a grid
spacing $h$. The function $\vec{U}(t)$ will be obtained as the
solution of a system of ordinary differential equations that
comes from approximating $\partial_{x}u(x,t)$ at points halfway
between the grid points, using a centered approximation in the
original equation~\eqref{eq:intro_1}. In matrix form, the system of autonomous
ordinary differential equations with Neumann boundary
conditions is
\begin{equation}\label{eq:7}
\frac{d\vec{U}}{dt} (t) = \bs{f}(\vec{U}(t)) =
\bs{A}(\vec{U}(t))\,\vec{{U}}(t)
\end{equation}
where $\bs{A} = [a_{ij}]$ has the following coefficients, for $i=2,\ldots,m-1$
\begin{subequations}\label{eq:7a}
\begin{align}
\displaystyle a_{i,i+1} =  \frac{g_{i}}{h^{2}}  = a_{i+1, i}; & \quad
\displaystyle a_{i,i}= -\frac{g_{i-1} +
g_{i}}{h^{2}}
\end{align}
\begin{equation}
\displaystyle a_{i, i-1} =
\frac{g_{i-1}}{h^{2}}=a_{i-1, i}
\end{equation}
\end{subequations}
being $ g_{i} = g\left(\frac{1}{h^{2}}|U_{i+1}-U_{i}|^{2}\right)$
with $i=1,\ldots, m-1$ and for the Neumann boundary conditions
$$a_{1, 1} =  -\frac{1}{h^{2}}\, g_{1};\quad
a_{m, m} =  -\frac{1}{h^{2}}\, g_{m-1}$$ and $a_{i,j} = 0$ otherwise.

The next step is to check whether the system of differential equations~\eqref{eq:7}, with the diffusivity function proposed in \eqref{eq:4}, satisfies the well-posedness and semi-dis\-cre\-te scale-space requirements~\cite{RefWorks:130}. For this reason, the matrix $\bs{A}(\vec{U})$ must meet certain requirements as were established in  \cite{RefWorks:130} by Weickert (see $(S1)\;  \text{to} \; (S5)$ in page 76).
It can be  easily proven that  $\bs{A}(\vec{U})$ is a real tridiagonal and
symmetric matrix (\cite[($S2$)]{RefWorks:130}). As a consequence of the positiveness of
diffusitivity ($g>0$) given by~\eqref{eq:4},  $a_{i,\, i+1}$ and
$a_{i+1,\, i}$ are different from zero and non-negative (\cite[($S4$)]{RefWorks:130}). Hence the matrix is
irreducible~\cite{RefWorks:81} (\cite[($S5$)]{RefWorks:130}).  It is clear that the
rows sum are zero and the off-diagonal terms are non-negative (\cite[($S3$)]{RefWorks:130}).  Now we
show that $\bs{A}(\vec{U})$ has the Lipschitz condition
for~\eqref{eq:4} for every bounded subset (\cite[($S1$)]{RefWorks:130}).
\begin{prop}\label{prop:lips}
The matrix $\bs{A}(\vec{U}) =[a_{ij}(\vec{U})]$ defined in \eqref{eq:7} with
the  diffusivity function $g$~\eqref{eq:4}, satisfies the Lipschitz
condition for any bounded subset of $\mathbb{R}^{m}$.
\end{prop}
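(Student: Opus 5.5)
The plan is to reduce the Lipschitz property of the matrix map $\vec{U}\mapsto\bs{A}(\vec{U})$ to the Lipschitz property of the scalar diffusivity $g$ from Proposition~\ref{prop:1}, composed with a locally Lipschitz map. We fix a bounded set $K\subset\mathbb{R}^{m}$ with $\|\vec{U}\|_{\infty}\le R$ for all $\vec{U}\in K$. Every entry $a_{ij}(\vec{U})$ is, up to the factor $\pm 1/h^{2}$, either a single $g_{i}$ or a sum $g_{i-1}+g_{i}$ of two of them. Hence it suffices to show that each map $\vec{U}\mapsto g_{i}(\vec{U}) = g\big(h^{-2}(U_{i+1}-U_{i})^{2}\big)$ is Lipschitz on $K$. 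Afterwards we use the equivalence of norms on the finite-dimensional space of $m\times m$ matrices; for instance, the Frobenius norm or the $\infty$-norm of $\bs{A}(\vec{U})-\bs{A}(\vec{V})$ is bounded by a fixed constant times $\max_{i}|g_{i}(\vec{U})-g_{i}(\vec{V})|$.

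For the key step, write $r_{i}(\vec{U}) = h^{-2}(U_{i+1}-U_{i})^{2}$. By Proposition~\ref{prop:1}, with $L=\frac{p}{2}\gamma^{-1}$,
\begin{equation*}
|g_{i}(\vec{U})-g_{i}(\vec{V})|\le L\,|r_{i}(\vec{U})-r_{i}(\vec{V})|.
\end{equation*}
Setting $\delta_{U}=U_{i+1}-U_{i}$ and $\delta_{V}=V_{i+1}-V_{i}$, we factor
\begin{equation*}
|r_{i}(\vec{U})-r_{i}(\vec{V})| = h^{-2}\,|\delta_{U}+\delta_{V}|\,|\delta_{U}-\delta_{V}|.
\end{equation*}
On $K$ we have $|\delta_{U}+\delta_{V}|\le 4R$, and $|\delta_{U}-\delta_{V}|\le 2\|\vec{U}-\vec{V}\|_{\infty}$. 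This gives $|g_{i}(\vec{U})-g_{i}(\vec{V})|\le 8LRh^{-2}\|\vec{U}-\vec{V}\|_{\infty}$. Finally, we collect the constants: diagonal entries involve two diffusivities, so each entry difference is at most $16LRh^{-4}\|\vec{U}-\vec{V}\|_{\infty}$. Summing over the at most three nonzero entries per row then yields the bound in the matrix $\infty$-norm.

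The only real subtlety is that the inner map $\vec{U}\mapsto r_{i}(\vec{U})$ is quadratic, so it is not globally Lipschitz. This is exactly why the statement is restricted to bounded subsets, and the factor $|\delta_{U}+\delta_{V}|$ is where boundedness enters. A second point to check is that Proposition~\ref{prop:1} is applied on all of $[0,\infty)$, including across the kink at $r=\gamma$. The proposition already covers this case, so no separate treatment of the nondifferentiability is needed.
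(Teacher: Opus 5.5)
Your proposal is correct and follows essentially the same route as the paper: Lipschitz continuity of $g$ from Proposition~\ref{prop:1}, the difference-of-squares factorization with boundedness controlling the sum factor, and then summing the entry bounds over the three nonzero entries per row (or column) of the tridiagonal matrix. The only cosmetic difference is that the paper works on the ball $\|\vec{U}\|_{1}\le K$ and bounds the matrix $1$-norm (getting $M=8KL/h^{4}$), whereas you use $\|\cdot\|_{\infty}$ for both the vectors and the matrix.
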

\begin{proof}
For any $K>0$ we consider the set given by
\begin{equation}\label{eq:set1}
\Upsilon = \{\vec{U} \in \mathbb{R}^{m}\; | \; \|\vec{U}\|_{1} \le K\}
\end{equation}
By the \textsf{Proposition~\ref{prop:1}} above,
\begin{multline*}
\left|g\left(\frac{1}{h^{2}}(U_{i+1}-U_{i})^{2}\right) -
g\left(\frac{1}{h^{2}}(V_{i+1}-V_{i})^{2}\right)\right|\\ \le
\left|\frac{L}{h^{2}} \left[ (U_{i+1}-U_{i})^{2} - (V_{i+1} -
V_{i})^{2} \right] \right|
\end{multline*}
but
\begin{multline*}
\left[ (U_{i+1}-U_{i})^{2} - (V_{i+1} - V_{i})^{2}
\right]\\
=\left[(U_{i+1}-U_{i})+(V_{i+1}-V_{i})\right]\\
\cdot\left[(U_{i+1}-V_{i+1})+(V_{i}-U_{i})\right]
\end{multline*}
and
$$0\le |U_{i+1} - U_{i}| \le K \quad i=1,\ldots,m-1 \quad \forall \,\vec{U}\in \Upsilon$$
hence
\begin{multline}\label{eq:olga}
\left|g\left(\frac{1}{h^{2}}(U_{i+1}-U_{i})^{2}\right) -
g\left(\frac{1}{h^{2}}(V_{i+1}-V_{i})^{2}\right)\right|\\ \le
\frac{2K\, L}{h^{2}} \bigg[\; |U_{i+1}-V_{i+1}| + |U_{i} - V_{i}|
\;\bigg].
\end{multline}
Now, let us denote by $\bs{A}(\vec{U}) = [a_{i,j}] \quad \text{and} \quad \bs{A}(\vec{V}) =
[\tilde{a}_{i,j}]$, then
\begin{equation*}
\| \bs{A}(\vec{U}) - \bs{A}(\vec{V}) \|_{1} = \max_{j}
\sum_{i=1}^{m} |a_{i,j}- \tilde{a}_{i,j}|.
\end{equation*}
The two matrices are symmetric and tridiagonal, therefore
\begin{multline*}
\|\bs{A}(\vec{U}) - \bs{A}(\vec{V}) \|_{1}
=\max_{j}\left[|a_{j,j-1}-\tilde{a}_{j,j-1}| +
|a_{j,j}-\tilde{a}_{j,j}| \right.\\ \left. + |a_{j, j+1}-\tilde{a}_{j,j+1}| \right].
\end{multline*}
According to~\eqref{eq:7a} and~\eqref{eq:olga},  we get:
\begin{multline*}
|a_{j,j}-\tilde{a}_{j, j}| \le  \frac{1}{h^{2}}\, |g_{j-1} -
\tilde{g}_{j-1}|+ \frac{1}{h^{2}}\, |g_{j} - \tilde{g}_{j}|\\
\le  \frac{2\,K\, L}{h^{4}} \bigg[\;|U_{j+1} - V_{j+1}|+
2\,|U_{j}-V_{j}| +
|U_{j-1} - V_{j-1}| \;\bigg]\\
\le\frac{4\,K\, L}{h^{4}}\|\vec{U} - \vec{V}\|_{1}.
\end{multline*}
Similar arguments apply to the cases
\begin{multline*}
|a_{j,j-1}-\tilde{a}_{j,j-1}| \le \frac{2\, L}{h^{4}} \|\vec{U} -
\vec{V}\|_{1} \quad \text{and} \\   |a_{j,j+1}-\tilde{a}_{j,j+1}|
\le \frac{2\,K\, L}{h^{4}}\|\vec{U} - \vec{V}\|_{1}.
\end{multline*}
Finally, we get
\begin{equation*}
\|\bs{A}(\vec{U}) - \bs{A}(\vec{V}) \|_{1}\le \, M\|\vec{U} - \vec{V}\|_{1}
\end{equation*}
being
$$M =\frac{8\,K\,L}{h^{4}}\quad \text{with}\quad L =\frac{p}{2}\,
\gamma^{-1}
$$
Since there has been no restriction on the set $\Upsilon$, it must be true for any bounded set of $\mathbb{R}^{m}$.\\ \qed
\end{proof}

Therefore, following the results by Weic\-kert and Be\-nha\-mouda in \cite{RefWorks:31},
the semidiscrete~\eqref{eq:7} process for
diffusivities $g$ \eqref{eq:4}, satisfies the well-posedness and
semi-discrete sca\-le-spa\-ce requirements~\cite{RefWorks:130}.

\subsection{Full discretization  for 1D}
We now proceed to discretize the temporal variable in the ordinary
differential system~\eqref{eq:7}. We have two possibilities, one is to set an explicit scheme. The drawback of this method
is the restriction for time step size~\cite{RefWorks:42}.
For this reason, we prefer to use implicit schemes. In particular we use
the Backward Euler method. In this case we have:
\begin{equation}\label{eq:impli_2}
\frac{\vec{U}^{n+1}-\vec{U}^{n}}{k}
=\bs{A}(\vec{U}^{n+1})\vec{U}^{n+1} =\vec{f}(\vec{U}^{n+1})
\end{equation}
where $\vec{f}$ is given by~\eqref{eq:7} and we approximate $\vec{U}(t_{n})= \vec{U}(nk)$ by $\vec{U}^{n}$
with $k\equiv \Delta t$, the time step size. We can easily realize that~\eqref{eq:impli_2} is a nonlinear algebraic equation. One straightforward
possibility to solve it, is to use the Newton-Raphson method. In this case, we consider the function
\begin{equation}\label{eq:impli_10}
\vec{q}(\vec{U}) = \vec{U}- k\,\vec{f}(\vec{U}) - \vec{U}^{n}=
\left[\bs{I}- k\,\bs{A}(\vec{U})\right]\vec{U} - \vec{U}^{n}
\end{equation}
and the Newton-Raphson's method is the iterative me\-thod
\begin{equation}\label{eq:newton}
\vec{U}^{\nu+1} = \vec{U}^{\nu}-\Big(\bs{I}-k\,
\frac{\partial\,\vec{f}}{\partial\vec{U}}(\vec{U}^{\nu})\Big)^{-1}\vec{q}(\vec{U}^{\nu})
\end{equation}
with $\nu = 0,1,\ldots$ and where $\displaystyle \Big(\bs{I}-k\,
\frac{\partial\,\vec{f}}{\partial\vec{U}}(\vec{U}^{\nu})\Big)$ is
the iteration matrix. To construct explicitly this iteration
matrix we give an expression for the Jacobian matrix of $\vec{f}$ given by~\eqref{eq:7}.
\begin{prop}\label{prop:f}
The Jacobian matrix of $\vec{f}$ defined by \eqref{eq:7} can be
expressed by a sum of two real tridiagonal and symmetric matrices
\begin{equation}\label{eq:discre_5}
\frac{\partial\,\vec{f}}{\partial\vec{U}}(\vec{U}) = \bs{A}(\vec{U})
+ \bs{C}(\vec{U}).
\end{equation}
\end{prop}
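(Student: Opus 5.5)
Write $\vec{f}(\vec{U}) = \bs{A}(\vec{U})\vec{U}$ componentwise as a divergence of discrete fluxes. Set $d_i = U_{i+1}-U_i$ and $\Phi_i = g_i\, d_i/h^{2}$ for $i=1,\ldots,m-1$, with the convention $\Phi_0=\Phi_m=0$ coming from the Neumann conditions. By \eqref{eq:7a} we then have $f_i = \Phi_i - \Phi_{i-1}$ for $i=1,\ldots,m$. Since $g_i = g(d_i^2/h^2)$ depends only on $U_i$ and $U_{i+1}$, each $\Phi_i$ depends only on these two entries. Hence $\partial f_i/\partial U_j$ vanishes unless $|i-j|\le 1$, so the Jacobian is tridiagonal.

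Next, differentiate a single flux using the chain rule, with $g'$ understood as the extension $\widehat{g'_a}$ from \eqref{eq:5}, which equals the derivative almost everywhere:
\[
\frac{\partial \Phi_i}{\partial U_{i+1}} = -\frac{\partial \Phi_i}{\partial U_{i}} = \frac{1}{h^{2}}\Big( g_i + \frac{2}{h^{2}}\, g'\big(\tfrac{d_i^{2}}{h^{2}}\big)\, d_i^{2}\Big) = \frac{g_i + c_i}{h^2},
\]
where $c_i := \frac{2}{h^{2}} g'(d_i^2/h^2)\, d_i^2$. This is the discrete analogue of $\rho''$ in \eqref{eq:9}. Substituting into $f_i=\Phi_i-\Phi_{i-1}$ gives the Jacobian entries
\[
\frac{\partial f_i}{\partial U_{i+1}} = \frac{g_i+c_i}{h^2},\qquad \frac{\partial f_i}{\partial U_{i-1}} = \frac{g_{i-1}+c_{i-1}}{h^2},\qquad \frac{\partial f_i}{\partial U_{i}} = -\frac{g_{i-1}+c_{i-1}+g_i+c_i}{h^2},
\]
with the boundary rows truncated in the same way as for $\bs{A}$. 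The terms carrying $g_i$ reproduce $\bs{A}(\vec{U})$ exactly. We therefore define $\bs{C}(\vec{U})$ by the same stencil \eqref{eq:7a} with $g_i$ replaced by $c_i$:
\[
c_{i,i+1}=c_{i+1,i}=\frac{c_i}{h^2},\qquad c_{i,i}=-\frac{c_{i-1}+c_i}{h^2},\qquad c_{1,1}=-\frac{c_1}{h^2},\qquad c_{m,m}=-\frac{c_{m-1}}{h^2}.
\]
Then $\bs{C}$ is real and tridiagonal, and it is symmetric because its $(i,i+1)$ and $(i+1,i)$ entries both come from the same flux term. This gives \eqref{eq:discre_5}.

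The main obstacle is differentiability. $g$ is not differentiable at $r=\gamma$, so the Jacobian does not exist classically where $d_i^2/h^2=\gamma$. I would handle this as the paper already does: adopt the convention $g'\equiv\widehat{g'_a}$, which is the one-sided derivative from the right. By Proposition~\ref{prop:1} and Rademacher's theorem, $\vec{f}$ is locally Lipschitz and differentiable almost everywhere, and the formula above holds wherever no $d_i^2/h^2$ equals $\gamma$. It also defines a consistent generalized Jacobian elsewhere. I would also remark that $c_i\le 0$ with $g_i+c_i<0$ for $d_i^2/h^2>\gamma$, by \eqref{eq:8}, so $\bs{C}$ carries the backward-diffusion effect.
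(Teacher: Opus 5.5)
Your proof is correct and is essentially the paper's argument. The paper writes the Jacobian as $\bs{A}(\vec{U}) + \frac{\partial \bs{A}}{\partial \vec{U}}\vec{U}$ and computes the entries of $\bs{C}$ explicitly, obtaining exactly your $c_{i,i+1} = \frac{2}{h^{4}} g'_i (U_{i+1}-U_i)^2$ together with the matching diagonal and boundary entries. Your flux form $f_i = \Phi_i - \Phi_{i-1}$ is a tidier bookkeeping of the same computation: it makes symmetry immediate, where the paper checks it by matching partial derivatives of the $a_{i,j}$. Your use of the right-hand derivative $\widehat{g'_a}$ at $r=\gamma$ is the same convention the paper adopts.
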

\begin{proof}
By definition of function the $\vec{f}(\vec{U}) = \bs{A}(\vec{U})\,\vec{U}$ and
using derivation rules, yields
\begin{equation*}
\bs{\Fb{J}} = \frac{\partial\,\vec{f}}{\partial\vec{U}}(\bs{U}) =
\bs{A}(\bs{U})+ \frac{\partial \bs{A}(\bs{U})}{\partial \bs{U}}
\bs{U}.
\end{equation*}
For simplicity of notation, we use\footnote{Note Einsten's criterion}
\begin{equation*}
\bs{C}(\vec{U})\equiv \frac{\partial \bs{A}(\bs{U})}{\partial
\bs{U}}\, \bs{U}, \quad \text{so}\quad
[c_{i,l}]=\left[\frac{\partial a_{i,j}}{\partial U_{l}}\,
U_{j}\right],
\end{equation*}
and $f_{i,j}$ for the Jacobian coefficients. It is easy to check that
$$\frac{\partial a_{i, i-1}}{\partial U_{l}} = 0 \quad \text{if} \; l\neq i-1 \; \text{or} \; l \neq i,$$
since $a_{i, i-1} = (1/h^{2})\, g_{i-1}$ only depends on $U_{i}$
and $U_{i-1}$. Similarly
$$\frac{\partial a_{i, i+1}}{\partial U_{l}} = 0 \quad \text{if} \; l\neq i+1 \; \text{or} \; l \neq i.$$
This gives the follow terms, not necessarily null:
\begin{eqnarray*}
c_{i,i+1} &=& \frac{2}{h^{4}}\,g'_{i}\cdot(U_{i+1}-U_{i})^{2}  \quad \text{with} \, i = 1,\ldots, m-1,\\
c_{i,i} &=&  -\frac{2}{h^{4}}\left[g'_{i}\cdot(U_{i+1}-U_{i})^{2}+g'_{i-1}\cdot(U_{i}-U_{i-1})^{2}\right]\\
  && \text{with} \, i = 2,\ldots, m-1,\\
c_{i,i-1} &=& \frac{2}{h^{4}}\,g'_{i-1}\cdot(U_{i}-U_{i-1})^{2}
\quad \text{with} \, i = 2,\ldots, m
\end{eqnarray*}
and
\begin{gather*}
c_{1,1} = -\frac{2}{h^{4}}\,g'_{1}\cdot(U_{2}-U_{1})^{2};\\
c_{m,m} = -\frac{2}{h^{4}}\,g'_{m-1}\cdot(U_{m}-U_{m-1})^{2},
\end{gather*}
where
\begin{equation}\label{eq:discre_10}
g'_{i} = g'\left(\frac{1}{h^{2}}(U_{i+1}-U_{i})^{2}\right)\qquad i =
1,\ldots, m-1.
\end{equation}
We also note that $c_{i,i+1}=c_{i+1,i}$ since
\begin{gather*}
\frac{\partial\,a_{i, i+1}}{\partial\,
U_{i+1}}U_{i+1} = \frac{\partial\,a_{i+1,
i+1}}{\partial\,U_{i}}U_{i+1}, \quad \text{and}\\
\frac{\partial\,a_{i, i}}{\partial\, U_{i+1}}U_{i} =
\frac{\partial\,a_{i+1, i}}{\partial\,U_{i}}U_{i}.
\end{gather*}
Similar arguments apply to the case  $c_{i-1,i} = c_{i,i-1}$. Consequently
we can establish the relation~\eqref{eq:discre_5}.\\ \qed
\end{proof}

It is worth pointing out several consequences that can be extracted from the last
proposition. Firstly, the expression~\eqref{eq:discre_5} is directly
connected with the term
\begin{equation}\label{eq:newton2}
g(\|\nabla u\|^{2})+2\,g'(\|\nabla u\|^{2})\|\nabla u\|^{2}
\end{equation}
appearing in equations~\eqref{eq:intri_11} or~\eqref{eq:9}. That is,
according to the matrix elements $a_{i,j}$ and $c_{i,j}$ of the expression \eqref{eq:discre_5} we can relate these terms with \eqref{eq:newton2} in the following: the coefficients of matrix $\bs(A)$ only use the function $g$, as we can see in \eqref{eq:7a}, while the coefficients of matrix $\bs(C)$ only use the function $g'$, as we can see in the proof of Proposition \ref{prop:f}.
Secondly, the
matrix $\bs{C}(\vec{U})$ is well defined because the derivative of the proposed diffusivity~\eqref{eq:4} exists at any point in $[0,\infty)$ as we have established in~\eqref{eq:5}.

On the other hand, we also note that if $g$ is a monotone
decreasing function,  its derivative $g'$ is non-po\-si\-ti\-ve. This implies
that the $c_{i,j}$ coefficients are al\-so non-po\-si\-ti\-ve and the coefficients of the Jacobian
could change their signs and thus the signs of the eigenvalues of the
Jacobian  matrix in the iterative process. This can make the iteration matrix singular and, hence, some important
instabilities in the numerical solution might appear.

To avoid this drawback, we apply the \emph{tangential stiffness
method} that will connect with an edge-pre\-ser\-ving pro\-cess. This scheme has already been used in nonlinear Finite
Element analysis context for nonlinear  mechanics of solids and
plasticity problems~\cite{RefWorks:92,RefWorks:93}. We propose using
this method to solve the nonlinear equation~\eqref{eq:impli_2}.
We give the following approximation to the Newton-Raphson iteration matrix~\eqref{eq:newton}
\begin{equation}\label{eq:aprox_1}
\bs{I}- k\frac{\partial\vec{f}}{\partial\vec{U}}\left(\vec{U}\right)
= \bs{I} - k\bs{A}(\vec{U})-k\bs{C}(\vec{U}) \approx \bs{I} -
k\bs{A}(\vec{U})
\end{equation}
Substituting this approximation in expression~\eqref{eq:newton} and using~\eqref{eq:impli_10}, we arrive at the following iterative process:
\begin{equation}\label{eq:iterativo}
\vec{U}^{\nu+1} = \left[\bs{I}-k\bs{A}(\vec{U}^{\nu})\right]^{-1}\vec{U}^{n}
\end{equation}
with $\nu = 0,1,2,\ldots \quad \forall\, k\in \mathbb{R}^{+}$
where we choose $\vec{U}^{n}$  for $\nu = 0$ as  first iteration
term. The advantage of using this iteration matrix lies in the fact
that  $\bs{W}_{k}(\vec{U}) \equiv
\left[\bs{I}-k\bs{A}(\vec{U})\right]$  is positive definite by the properties of $g > 0$ for all $\vec{U}$ and $k\in
\mathbb{R}^{+}$ (see Ortega~\cite[p. 107]{RefWorks:91}).

Since $\bs{W}_{k}(\vec{U})$ is positive definite,
$\bs{W}_{k}(\vec{U})^{-1}$ exists. We also know
that $\bs{W}_{k}(\vec{U})$ satisfies $w_{i,j} \le 0$, $i\neq j$ and
$w_{i,i}>0$, $i=1,\ldots, m$ and is symmetric, then
by~\cite[p. 110]{RefWorks:91}, it is a \emph{Stieltjes matrix}
so, $\bs{W}_{k}(\vec{U})^{-1}\ge O$ (see~\cite{RefWorks:91}).

We now consider $[\vec{V}]^{t} =
[1,1,\ldots,1]$ and since $\bs{A}(\vec{U})$ has unit row sum, we
have $[\vec{V}]^{t}\bs{W}_{k}(\vec{U})=[\vec{V}]^{t}$ and it yields
$[\vec{V}]^{t}\bs{W}_{k}(\vec{U})^{-1}=[\vec{V}]^{t}$. Then we also can conclude that
\begin{equation*}
\|\bs{W}_{k}(\vec{U})^{-1}\|_{1} \equiv
\bigg\|\left[\bs{I}-k\bs{A}(\vec{U})\right]^{-1}\bigg\|_{1} = 1
\quad \forall \, k \in \mathbb{R}^{+}
\end{equation*}
Because of $\left[\bs{I}-k\,\bs{A}\left(\vec{U}\right)\right]^{-1}$
is symmetric, we also have
$\left\|\left[\bs{I}-k\,\bs{A}\left(\vec{U}\right)\right]^{-1}\right\|_{\infty}
= 1$
\begin{prop}
For the iterative process~\eqref{eq:iterativo}, we have
\begin{equation*}
\|\vec{U}^{\nu+1}\|_{1} \le \|\vec{U}^{n}\|_{1} \qquad \nu =
0,1,2,\ldots  \qquad \forall\, k \in \mathbb{R}^{+}
\end{equation*}
where we choose $\vec{U}^{n}$ as the first iteration.
\end{prop}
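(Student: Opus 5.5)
The plan is to combine the iteration formula~\eqref{eq:iterativo} with the norm identity $\|\bs{W}_{k}(\vec{U})^{-1}\|_{1}=1$ established just before the statement, using the submultiplicativity of the induced $1$-norm. The key observation is that in~\eqref{eq:iterativo} the right-hand side vector is always $\vec{U}^{n}$, independent of $\nu$. Only the matrix $\bs{W}_{k}(\vec{U}^{\nu})$ changes with $\nu$. Hence no induction on $\nu$ is required: each iterate is bounded directly in terms of $\vec{U}^{n}$.

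First, fix $\nu\ge 0$ and $k>0$. Since $\bs{W}_{k}(\vec{U}^{\nu})$ is positive definite for every argument (as noted, by $g>0$), its inverse exists, and~\eqref{eq:iterativo} is well defined. Next, write
\begin{equation*}
\|\vec{U}^{\nu+1}\|_{1} = \left\|\bs{W}_{k}(\vec{U}^{\nu})^{-1}\vec{U}^{n}\right\|_{1} \le \left\|\bs{W}_{k}(\vec{U}^{\nu})^{-1}\right\|_{1}\,\|\vec{U}^{n}\|_{1}.
\end{equation*}
Finally, substitute $\|\bs{W}_{k}(\vec{U}^{\nu})^{-1}\|_{1}=1$, which holds for any $\vec{U}$ and any $k\in\mathbb{R}^{+}$, to conclude.

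The main point needing care is the justification of $\|\bs{W}_{k}(\vec{U})^{-1}\|_{1}=1$, which I would recheck rather than take on faith. The argument has two parts.

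\begin{enumerate}
\item \emph{Nonnegativity.} $\bs{W}_{k}$ is a symmetric matrix with nonpositive off-diagonal entries and positive diagonal, and it is positive definite. It is therefore a Stieltjes matrix, so $\bs{W}_{k}^{-1}\ge O$ entrywise.
\item \emph{Column sums.} The rows of $\bs{A}$ sum to zero. This is the correct reading of ``unit row sum'' for $\bs{I}-k\bs{A}$, and by symmetry the columns of $\bs{A}$ also sum to zero. Hence $[\vec{V}]^{t}\bs{W}_{k}=[\vec{V}]^{t}$, so $[\vec{V}]^{t}\bs{W}_{k}^{-1}=[\vec{V}]^{t}$.
\end{enumerate}

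Together these say every column of the nonnegative matrix $\bs{W}_{k}^{-1}$ sums to $1$. The induced $1$-norm is the maximum absolute column sum, so it equals exactly $1$.

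Positive definiteness is what rules out a singular iteration matrix. I would verify it briefly: $-\bs{A}$ is symmetric and weakly diagonally dominant with nonnegative diagonal, hence positive semidefinite, so $\bs{I}-k\bs{A}\succeq\bs{I}$. With these facts in hand, the proposition follows in one line for every $\nu$ and every $k>0$.
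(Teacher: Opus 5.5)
Your proposal is correct and is essentially the paper's proof: you apply submultiplicativity of the induced $1$-norm to $\vec{U}^{\nu+1}=\bs{W}_{k}(\vec{U}^{\nu})^{-1}\vec{U}^{n}$ and use $\|\bs{W}_{k}(\vec{U})^{-1}\|_{1}=1$, and your re-derivation of that identity (Stieltjes property for entrywise nonnegativity, $[\vec{V}]^{t}\bs{W}_{k}=[\vec{V}]^{t}$ for unit column sums) is the argument the paper gives just before the statement. You are also right that the paper's phrase ``unit row sum'' for $\bs{A}$ should read zero row sum, which is what makes $[\vec{V}]^{t}\bs{W}_{k}=[\vec{V}]^{t}$ hold.
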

\begin{proof}
The proof is straightforward. By
$$\left\|\left[\bs{I}-k\,\bs{A}\left(\vec{U}\right)\right]^{-1}\right\|_{1} = 1\quad \text{it follows that}$$
\begin{equation*}
\|\vec{U}^{\nu+1}\|_{1} \le
\left\|\left[\bs{I}-k\,\bs{A}\left(\vec{U}^{\nu}\right)\right]^{-1}\right\|_{1}\|\vec{U}^{n}\|_{1}
= \|\vec{U}^{n}\|_{1}
\end{equation*}
with $\nu = 0,1,2,\ldots \quad \forall\, k
\in \mathbb{R}^{+}$.
This establishes that the iterative process is  stable $\forall\, k \in \mathbb{R}^{+}$. \\ \qed
\end{proof}

The iterative scheme~\eqref{eq:iterativo} is also known as the
\emph{direct iteration method}, the \emph{successive approximation
method} or the \emph{Picard method}~\cite{RefWorks:92,RefWorks:93}.

We emphasize  several points about the iterative
process in \eqref{eq:iterativo}.
\begin{enumerate}
\item \label{condiciones2} If we only take one iteration, we get the
\emph{semi-im\-pli\-cit me\-thod}~\cite{RefWorks:42}. In this case, from the above results, it can directly prove that
the iterative process in \eqref{eq:iterativo}, with $g$ given by~\eqref{eq:4},
verifies the following criterion~\cite{RefWorks:42}: $(D1)$
\emph{Continuity in its argument}, $(D2)$ \emph{Symmetry}, $(D3)$
\emph{Unit row sum}, $(D4)$ \emph{Positive diagonal}, $(D5)$
\emph{Irreducibility}. Under these conditions, the iterative
process~\eqref{eq:iterativo} with only one time step, creates a
discrete scale-space~\cite{RefWorks:130}. So,  it has proved that the function of diffusivity proposed in~\eqref{eq:4} satisfies the well-posedness and semi-discrete and full discrete scale-space requirements.

\item According to the approximation given by~\eqref{eq:aprox_1} we do not construct the inverse diffusion numerically,
we only do forward diffusion. We do not use the matrix $\bs{C}$ because it involves
the derivative of the diffusivity $g$ and it is non-po\-si\-ti\-ve as we have mentioned before. However, $g$ is a
decreasing function, which implies diffusion with different speed. That is, for those pixels that satisfy the condition  $\|\nabla u\|^{2} <\gamma$, the proposed diffusion function $g$~\eqref{eq:condition_2} reaches its maximum  value,
$g=1$, we get the linear diffusion. However, when we have pixels that satisfy the condition $\gamma\le \|\nabla u\|^{2} $, then the proposed diffusion function $g$~\eqref{eq:condition_2} takes a value between $0$ and $1$ and it produces  slower
diffusion for those pixels. Thus, two contiguous pixels can be separated because one of them can produce a value of $g=1$ and
the other one $0<g<1$. This is a consequence of using an edge-preserving process.


\item From the iterative process~\eqref{eq:iterativo}, we can establish the following iterative scheme in the compact and convex set $\Upsilon$ \eqref{eq:set1}
\begin{equation}\label{eq:aprox_3}
\vec{U}^{\nu+1} = \bs{G}(\vec{U}^{\nu}),\; \text{with} \;
\bs{G}(\vec{U}^{\nu}) = \left[\bs{I}-k\bs{A}(\vec{U}^{\nu})\right]^{-1}\vec{U}^{n},
\end{equation}
for $\nu=0,1,2\ldots$ that easily yields a \emph{fixed point iteration}. Obviously a
fixed point for~\eqref{eq:aprox_3}, that is $\vec{U}^{*} =
\bs{G}(\vec{U}^{*})$ is equivalent to the equation
$\vec{q}(\vec{U}^{*}) = \vec{0}$. It is not our purpose to study
this iterative process when $\nu \ge 1$, because it is not clear
that this discrete scheme could be connected with a filtering
process and create a discrete
space-scale~\cite{RefWorks:42,RefWorks:130}. However, this result is interesting because it allows us to say that the iterative process~\eqref{eq:aprox_3} has at least one solution, without putting any restrictions on the parameter $k$.
\end{enumerate}

\section{Numerical approximation for N dimension}\label{sec:numerical2D}
We now extend the iterative process~\eqref{eq:iterativo} for two and
three space dimensions and in the remainder of this section we
assume $g$ to be the diffusivity defined by~\eqref{eq:4}. In this
case, we follow a similar process as in the case of 1D.

\subsection{Semi-discretization}
We again apply the \emph{method of lines} to the
equation~\eqref{eq:intro_1} for 2-D or 3-D. On the rectangular or
brick domain  we establish a spatial grid. Therefore, the
semidiscrete system has dimension $m = m_{1}\cdots m_{s}$, where
$s=2$ or $3$. Again, solving the semi-dicrete problem means finding
a function $\vec{U}(t)$ which is approximation to the solution $u$
at discrete grid points $(x_{1},\ldots, x_{s}, t)$. The function
$\vec{U}(t)$ will be obtained as the solution of a system of
ordinary differential equations that it comes  from approximating
$\partial_{x_{r}}u_{\alpha}$ (where $r = 1,\ldots,s$ and
$\alpha=(i,j)$ or $\alpha=(i,j,k)$) at points halfway between the
grid points, using a centered approximation in the original
equation~\eqref{eq:intro_1}.

We only use one index for pixel numbering, so we can represent the
whole image of size $m_{1}\times \cdots \times m_{s}$ by a vector of
size $m =m_{1}\cdots m_{s}$. In this case, we have again an ordinary
differential system of type~\eqref{eq:7} where the coefficient
matrix $\bs{A}(\vec{U})$ is now a block tridiagonal matrix of order
$m$. It is straightforward to check that $\bs{A}(\vec{U})$ satisfies
the conditions of Lipschitz-continuity, symmetry, vanishing row
sums, nonnegative off-diagonals and finally,  because of its associated
directed graph is stron\-gly connected (see Ortega~\cite{RefWorks:91}),
it verifies the condition of irreducibility. Therefore,
this differential system of type~\eqref{eq:7} also satisfies the
prerequisites to be a filtering process (see
Weickert~\cite{RefWorks:130}).

It is worth pointing out that $\bs{A}(\vec{U})$ can be decomposed in
the form
\begin{equation}\label{eq:matcoef2}
\bs{A}(\vec{U}) = \sum_{r=1}^{s}\bs{A}_{r}(\vec{U})
\end{equation}
where each $\bs{A}_{r}(\vec{U})$ represents one-dimensional
semi-dis\-cre\-ti\-za\-tion of the diffusion process along the $x_{r}$ axes.
It is immediate that $\bs{A}_{r}(\vec{U})$ with $r=1,\ldots,s$
verify again the properties of Lipschitz-continuity, symmetry,
vanishing row sums and nonnegative off-diagonals.

\subsection{Time discretization}
We can  extend to 2D or 3D the full discretization process developed for
1D. Therefore, from the semi-discrete system~\eqref{eq:7} with the
block-tridiagonal matrix $\bs{A}(\vec{U})$ as coefficient matrix, we
arrive to the nonlinear equation~\eqref{eq:impli_2}. We also use the
\emph{tangential stiffness method} to solve it. It yields to a
similar iterative process~\eqref{eq:iterativo} that we have
established for 1D:
\begin{equation}\label{eq:iterativo_nd}
\vec{U}^{\nu+1} = \Big[\bs{I}-k\,
\bs{A}(\vec{U}^{\nu})\Big]^{-1}\vec{U}^{n}\quad \text{with } \;\nu =
0,1,\ldots
\end{equation}
It is straightforward to show that the \emph{iteration matrix}
$\bs{W}_{k}(\vec{U}) = \bs{I}-k\bs{A}(\vec{U})$ has also the
properties that we have proved for 1D and it is immediate to check
that $\|\left[\bs{I}-k\bs{A}(\vec{U})\right]^{-1}\|_{1} =1$ too.
Finally, it follows easily that
$\left[\bs{I}-k\bs{A}(\vec{U}^{\nu})\right]^{-1}$ satisfies the
requirements of continuity in its arguments, symmetry, unit row sum,
positive diagonal and irreducibility that it can be done in a
similar way as the 1D case. Thus, for only one iteration, that is:
$\vec{U}^{n}$ for $\nu = 0$ (semi-implicit method) we get the
scale-space requirements~\cite{RefWorks:130}.

One possibility to solve the iterative
process~\eqref{eq:iterativo_nd} is the iterative scheme
\begin{equation}\label{eq:iterativo2}
\vec{U}^{\nu+1} =\frac{1}{s}\sum_{r=1}^{s}\left[\bs{I} -
s\,k\bs{A}_{r}\left(\vec{U}^{\nu}\right)\right]^{-1}[\vec{U}^{n}]
\equiv \bs{Q}\left(\vec{U}^{\nu}\right)[\vec{U}^{n}]
\end{equation}
with $\nu = 0,1,\ldots$ where we get the \emph{AOS
scheme}~\cite{RefWorks:42} for $\nu = 0$.

For one iteration, we take $\vec{U}^{n}$ for $\nu= 0$,  then $\bs{Q}(\vec{U}^{n})$ satisfies again
the properties of continuity in its arguments, symmetry, unit row
sum, positive diagonal and irreducibility  and consequently the
iterative process $\vec{U}^{n+1} = \bs{Q}(\vec{U}^{n})[\vec{U}^{n}]$
creates a discrete scale-space, for the propose diffusivity $g$~\eqref{eq:4}.

It is worth pointing out that the same conclusion that we got for 1D can
be drawn for 2D and 3D. That is, as a consequence  of the iterative
process~\eqref{eq:iterativo_nd}, we again do not use the derivative
of $g$ for discrete scheme because we do not use the matrix
$\bs{C}$, so we do not backward diffusion from the numerical point
of view, but forward diffusion with a decreasing filter $g$.

On the other hand, with the discrete scheme~\eqref{eq:iterativo2},
we only can ensure again, a discrete scale-space process for $\nu =
0$ and for each time step $k>0$, as we did for 1D.

\clearpage
\section{Experiments}
In this section, we report some experimental results obtained when we apply our filter to natural gray-scale images chosen from
a public database~\cite{RefWorks:196}. 
The aim of the experiments is to show the performance of our filter to obtain piecewise constant images as a previous step to the segmentation process. In the proposed method, four key parameters are investigated, the first two are specific to the numerical method: the stopping time $T$ and the time step $k$. The other two are needed to define the diffusivity: the edge strength threshold $\gamma$ and the parameter $p$.

We take into account a two dimensional discrete image of size $m_1 \times m_2$ as a vector ${\boldsymbol U} \in \mathbb{R}^m$, $m=m_1 \times m_2$, whose components, $U_i$, $i=1, \ldots, m$, represent the grey values at each pixel. Let ${\boldsymbol U^{(0)}}$ be the original image, then the discrete scale-space given by \eqref{eq:iterativo2}, is used to  calculate a sequence $({\boldsymbol U^{(j)}})_{j \in \mathbb{N}}$ of filtered versions of ${\boldsymbol U^{(0)}}$ by gradually removing noise and details until the image converges to the average grey level $\mu = \frac{1}{m}\sum_{i=1}^m U^{(0)}_i$, as $t \to + \infty$.

\subsection{Stopping time $T$ and time step $k$}
For nonlinear diffusion, the stopping time has a strong effect on the diffusion result and, as far as we know, there is not a straightforward stopping criterion. Several studies have addressed the stopping time problem in the field of PDEs image restoration. In \cite{RefWorks:186,RefWorks:188,RefWorks:37,RefWorks:189,RefWorks:190,RefWorks:191,RefWorks:133,RefWorks:192,solo2001automatic,RefWorks:193,RefWorks:194}, criterion on the stopping time selection are closely linked to the noise-filtering problem. In these works, the aim is to stop the process before the structure of the image has been modified too much while textured structures must be maintained.

In our case, the aim is to achieve piecewise constant images as a previous process of segmentation techniques, therefore we must determine a stop\-p\-ing ti\-me not only to eliminate image noise but noise and texture as well.
The proposed stopping time takes into account that the diffusivity we use is {\em linear} when the modulus of the gradient is less than $\sqrt{\gamma}$, and tends to zero when the modulus of the gradient is greater than $\sqrt{\gamma}$, thus instead of solving the nonlinear problem, we solve the linear diffusion problem and consider the setting time, $t_s$,
\begin{equation}\label{eq:tiempo}
\frac{\|{\boldsymbol U^{(n)}}- {\boldsymbol \mu} \|}{\|{\boldsymbol \mu}\|} \le 0.02 \qquad t_s=n \cdot k
\end{equation}
where ${\boldsymbol \mu} = (\mu, \ldots, \mu)^{T} \in \Mb{R}^{m}$, $n$ is the number of iterations performed and ${\boldsymbol U^{(n)}}$ is the filtered image using linear diffusion. We note that the proposed stopping time, $T=t_s=n \cdot k$,  operates without prior knowledge of the geometric or statistical structure of the image because it depends only on the size and pixel intensities of the initial image ${\boldsymbol U^{(0)}}$. Since the amount of work involved in the method is proportional to the number of individual steps, we attempt to choose the time step $k$ as large as possible, but such that no numerical artifacts appear when we solve the nonlinear problem using the decomposition method AOS. Experimentally we found that the time step size can take high values without noticeable difference in the results.

Given the proposed nonlinear diffusion filter, an original image is composed of several regions and choose the parameters $p$ and $\gamma$ as described in the following subsection, the filtered image obtained on the scale $t_s$ is an image in which regions are homogenized, while the edges are not blurred by the strength of the diffusion function. For example, the regions of the images in Fig.~\ref{fig:Ideales} are homogenized with the proposed filter while the edges are not blurred.
\begin{figure}[H]
\centering
\subfigure[]{\includegraphics[width=0.48\textwidth]{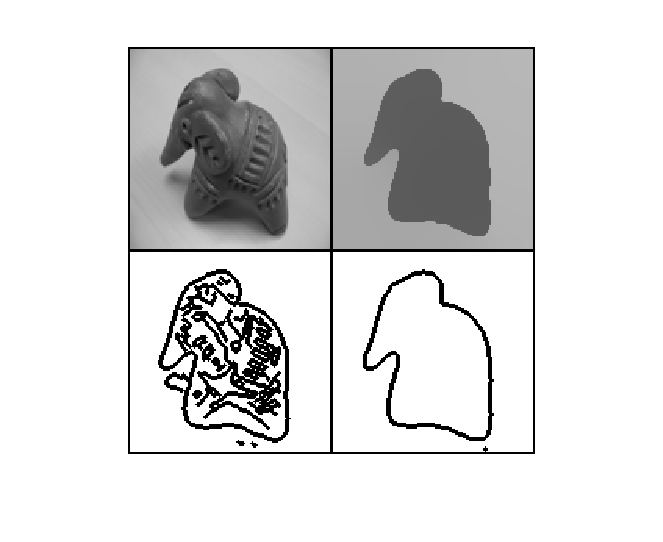}\label{subfig:Elephantk25}}
~
\subfigure[]{\includegraphics[width=0.48\textwidth]{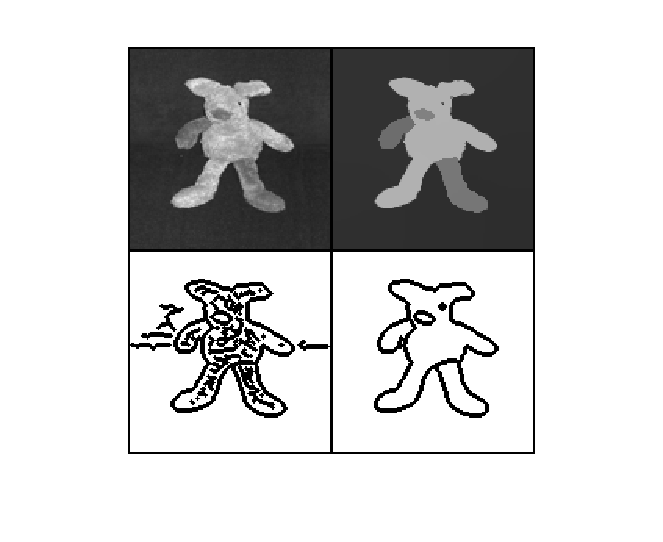}\label{subfig:Teddyk25}}
\caption{\label{fig:Ideales} Top row: The left one, show the original image: ${\boldsymbol U^{(0)}}$. The right one, show the last iteration performed: ${\boldsymbol U^{(n)}}$. The image size was $128 \times 128$ and the number of iterations, $n$, was obtained from \eqref{eq:tiempo}. Bottom row: The Canny detector, $(\sigma=1)$, in order to compare the edges detected in the initial image and the filtered image.}
\end{figure}

\subsection{Diffusivity parameters: $\gamma$ and $p$}

The parameter $\gamma$ has been chosen taking into account the statistical interpretation of nonlinear diffusion from the point of view of robust statistics developed in~\cite{RefWorks:157}.
The authors show that nonlinear diffusion can be seen as a robust estimation procedure that estimates a piecewise constant image from a noisy image where the bo\-un\-da\-ries between the piecewise regions are considered to be outliers. We apply this statistical interpretation to select $\gamma = (1.4826 \cdot MAD(\| \nabla({\boldsymbol U^{(0)}})\|))^2$, where $MAD$ is the median absolute deviation.

The diffusivity is defined so that the filter is linear when $\| \nabla({\boldsymbol U^{(j)}})\| \le \sqrt{\gamma}$  and blurs the regions and when  $\| \nabla({\boldsymbol U^{(j)}})\| > \sqrt{\gamma}$, the diffusion function $g(\| \nabla({\boldsymbol U^{(j)}})\|)$ tends to zero as shown in Fig.~\ref{subfig:F_HM}. We also note that high $p$ values in~\eqref{eq:4} prevent edge blurring because $p$ tends to the \emph{bounded step function} (Fig.~\ref{fig:pulso}). Then, in the experiments, a piecewise constant image should be obtained for low values of $p$ if the contrast among the objects and the background is  high. The value of $p$ should be higher if the image has more details or the contrast between the objects and the background is low. We apply the diffusion model to three images in order to show the effects of different values of $p$.

In Fig.~\ref{fig:Variosp} we see that the background of the flower image has more details than the bear image, then for the bear we obtained piecewise constant images when $p=2.5$ or $3$ and for the flower when $p=5$ or $5.5$. In the case of the bush image, Fig.~\ref{fig:BushVariosp} shows that the contrast among the different objects is low and to avoid blurring the value of $p$ must be greater than in previous cases, $p=19$ or $20$.

\begin{figure}[H]
\centering
\includegraphics[width=\linewidth]{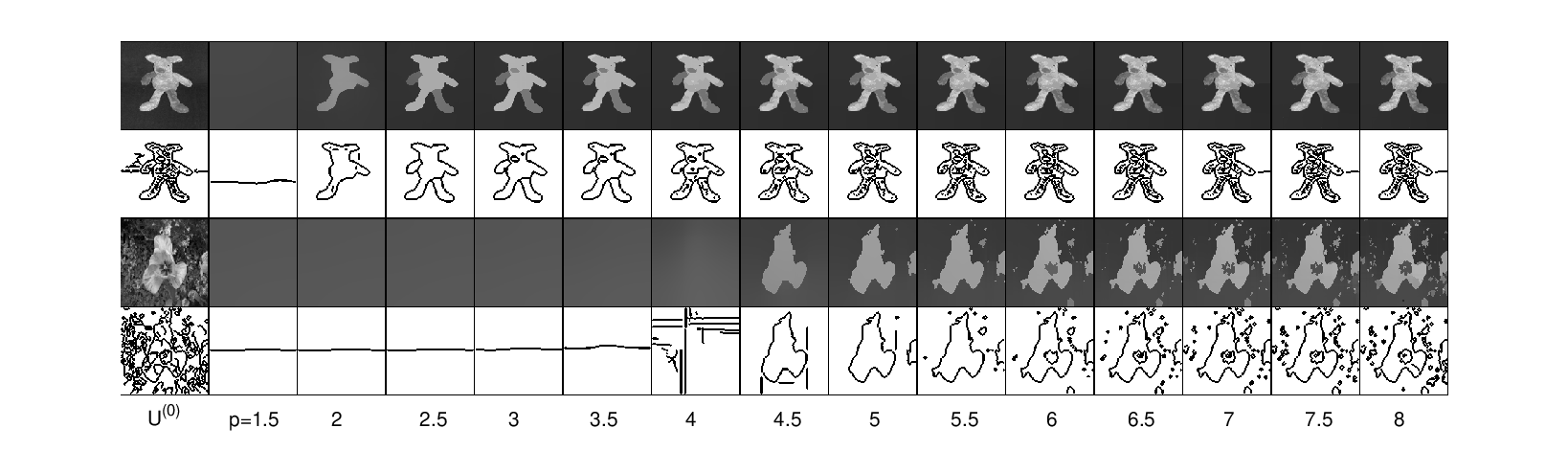}
\caption{\label{fig:Variosp} For $p=1.5,\, 2, \, 2.5, \,  \ldots ,8$ we show the original images ${\boldsymbol U^{(0)}}$ and ${\boldsymbol U^{(n)}}$ which correspond to the last iteration performed. The image size was $128 \times 128$ and the number of iterations, $n$, was obtained from \eqref{eq:tiempo}. The time step was $k=200$ and the iterations performed were $n=39$ for the bear image and $n=42$ for the flower image. The edges were detected using Canny detector $(\sigma=1)$.}
\end{figure}

\begin{figure}[H]
\centering
\includegraphics[width=\linewidth, keepaspectratio]{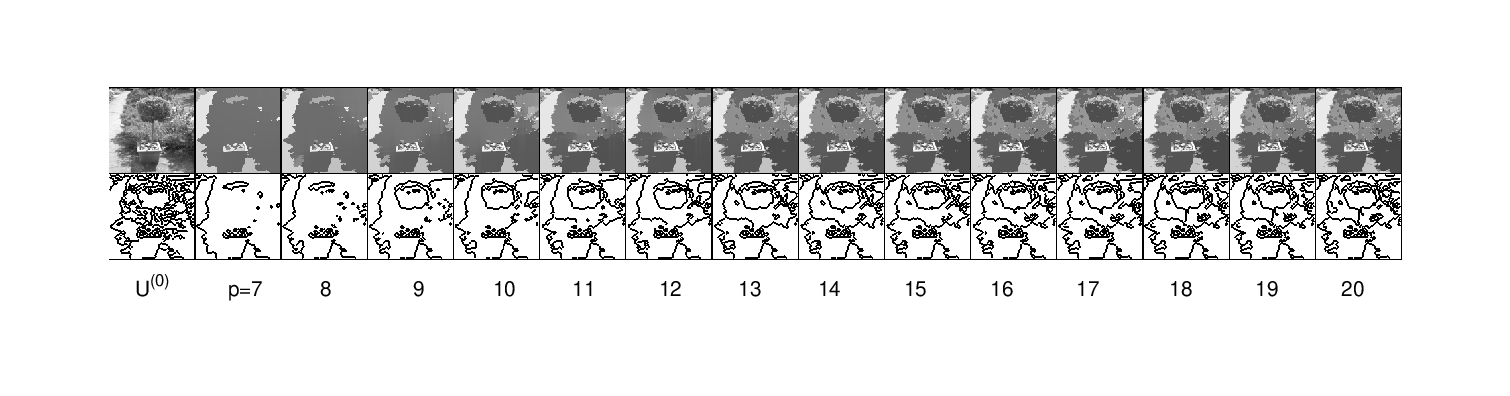}
\caption{\label{fig:BushVariosp} For $p=7,\, 8, \ldots, 20$ we show the original bush image ${\boldsymbol U^{(0)}}$ and ${\boldsymbol U^{(n)}}$ which corresponds to the last iteration performed. The image size was $128 \times 128$ and the number of iterations, $n$, was obtained from \eqref{eq:tiempo}. The time step was $k=200$ and the iterations performed were $n=51$. The edges were detected using Canny detector $(\sigma=1)$.}
\end{figure}

\subsubsection{Proposal tuning parameter $p$ by using prior knowledge}

To select the value of $p$, we have used an edge detector and a training image database with their manual segmentation, as ground truth. Fig.~\ref{fig:GroundT} shows some manual segmentations used for tuning the value of $p$.
Then we have considered
the \emph{precision} and \emph{recall} measures to calculate the \emph{ F-measure} \cite{van1979theoretical2,RefWorks:195}. Let $E_I$  be the set of edge pixels for the manual segmentation and $E_F^p$ the set of edge pixels of the filtered image,
$$P(p)= |E_I \cap E_F^p|/|E_F^p| \qquad R(p)=|E_I \cap E_F^p|/|E_I|$$
where $| \cdot |$ is the counting measure. That is, the  {\em precision} $P(\cdot)$, is the ratio of the true edge pixels detected to the total number of edge pixels detected, and the {\em recall} $R(\cdot)$ is the ratio of the true edge pixels detected to the number of edge pixels of the manual segmentation.

The \emph{F-measure} is defined as the weighted harmonic mean of precision and recall,
$$F(p,\alpha) = P(p)R(p)/(\alpha R(p) + (1-\alpha)P(p)) \;\; \alpha \in [0,1]$$
We chose: $p^{*}= \mbox{arg max} \{F(p,1/2): p > 1 \}$. In the experiments, we have used $p= 1.5, 2, 2.5, \ldots,  20$ and $\alpha= 1/2$ to attach equal importance to precision and recall.
\begin{figure}[H]
\centering
\includegraphics[width=0.9\linewidth,keepaspectratio]{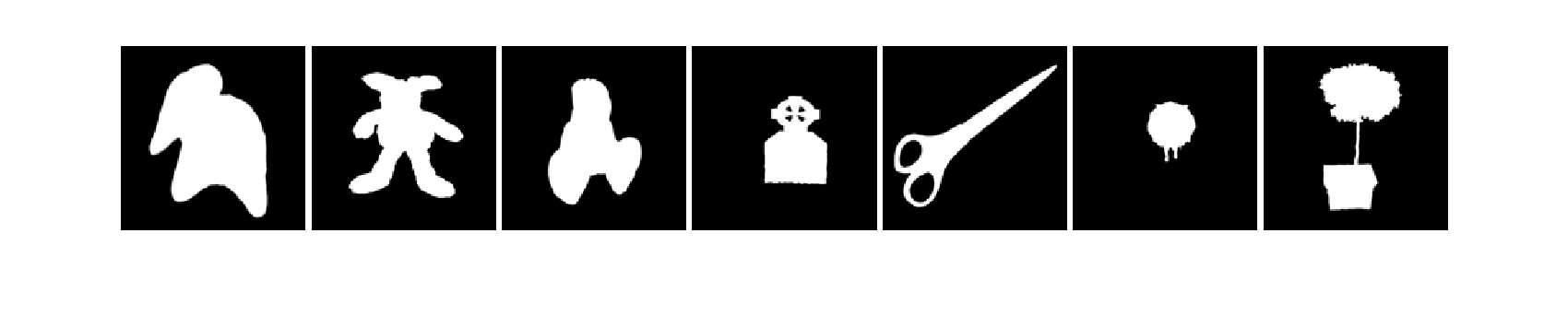}
\caption{\label{fig:GroundT} Ground truth images.}
\end{figure}

\begin{figure}[H]
\centering
\includegraphics[width=0.7\linewidth,keepaspectratio]{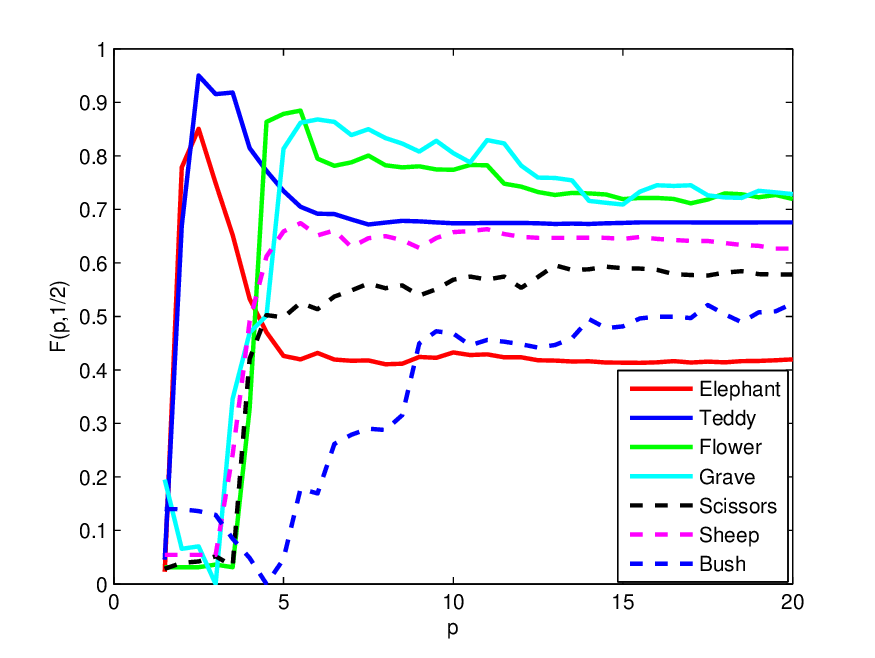}
\caption{\label{fig:MediaArmonica128k200} F-measure, $F(p,1/2)$, is shown for $p = 1.5, \, 2, \, 2.5, \ldots ,20$.}
\end{figure}

\begin{figure}[H]
\centering
\includegraphics[width=\linewidth,keepaspectratio]{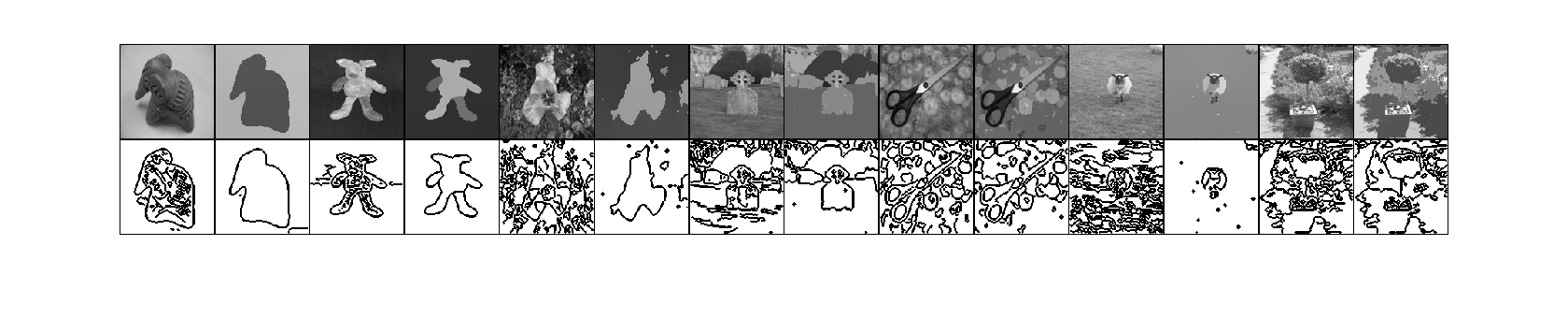}
\caption{\label{fig:todas128k200} The original image ${\boldsymbol U^{(0)}}$ and the image corresponding to the last iteration ${\boldsymbol U^{(n)}}$ are shown here. The size of all images is $128 \times 128$ and $k=200$. The $p^{*}$ values and the iterations performed are: elephant $p^{*}=2.5$, $n=39$, bear $p^{*}=2.5$, $n=39$, flower $p^{*}=5.5$, $n=42$, grave $p^{*}=6$, $n=35$, scissors $p^{*}=13$, $n=41$, sheep $p^{*}=5.5$, $n=32$ and bush $p^{*}=20$, $n=51$. The edges were detected using Canny detector $(\sigma=1)$.}
\end{figure}

\begin{table}[H]
\caption{ The first row shows, $F_0$, the value of the F-measure for the original images. The second row shows $p^{*}= \mbox{arg max} \{F(p,1/2): p=1.5,2, \ldots, 20\}$. The third row shows the maximum value of the F-measure for $p^{*}$.}\label{TablaF}
\begin{center}
\begin{tabular}{|l|lllllll|} \hline
\multicolumn{8}{|c|}{{\bf Images}}  \\ \hline
 & Bear & Bush & Elephant &  Flower & Grave & Scissors & Sheep   \\ \hline
$F_0$ & 0.597 & 0.578 & 0.379 &  0.573 & 0.606 &0.704 & 0.623  \\ \hline
$p^{*}$ & 2.5 & 20 & 2.5 & 5.5 & 6& 13 & 5.5   \\ \hline
$F(p^{*},1/2)$ & 0.918 & 0.554 & 0.852  & 0.913 & 0.832 & 0.693 & 0.699  \\ \hline
\end{tabular}
\end{center}
\end{table}

Fig.~\ref{fig:todas128k200} shows the performance of our filter with the proposed parameters. When it is applied to images with large and high contrast regions (elephant, bear, flower and grave) we see in Fig.~\ref{fig:MediaArmonica128k200} that the F-measure achieves an absolute maximum, edges are well preserved, no blurring is introduced and piecewise quasi-constant images are achieved. When the filter is applied to images in which the contrast between the object and the background is low or images with small objects and many local details along with edges that have low gradient (scissors, bush), the F-measure reaches a maximum value for a parameter $p$ high, in order to avoid blurring of the edges, but if we compare the maximum value for these cases there is not a significant improvement with the value of F-measure for the initial image, see Tab.~\ref{TablaF}. In such cases in order to improve the outcomes the use of local $\gamma$ has to be carefully analyzed.

%
%
Fig.~\ref{fig:comparative_filters} and Table~\ref{Table:comparative_filters} illustrate a comparative with two filter for obtaining piecewise constant images: (a) the cartoon variation image decomposition~\cite{RefWorks:165} and (b) a nonlinear diffusion filter which uses the AOS scheme ~\cite{RefWorks:42}. These approaches are chosen because their tasks are the processing of piecewise constant images and these filters have very few parameters to adjust. The first one only requires a weighting between the fidelity term and TV regularization. In the second approach, the diffusivity function is replaced in the proposal and the numerical scheme is the same. Experimental results show: (a) the stopping time procedure is robust. The diffusivity function is changed in the numerical scheme and the piecewise constant images are obtained. (b) The proposed approach gives better qualitative results. High contrast details are kept unlike other two approaches and (c) the F-measures of our approach are higher than in the other two filters.
The scripts used in this study are available at 
\url{https://github.com/cplatero/NonlinearDiffusion}.

\begin{figure}[H]
\centering
\includegraphics[width=\linewidth]{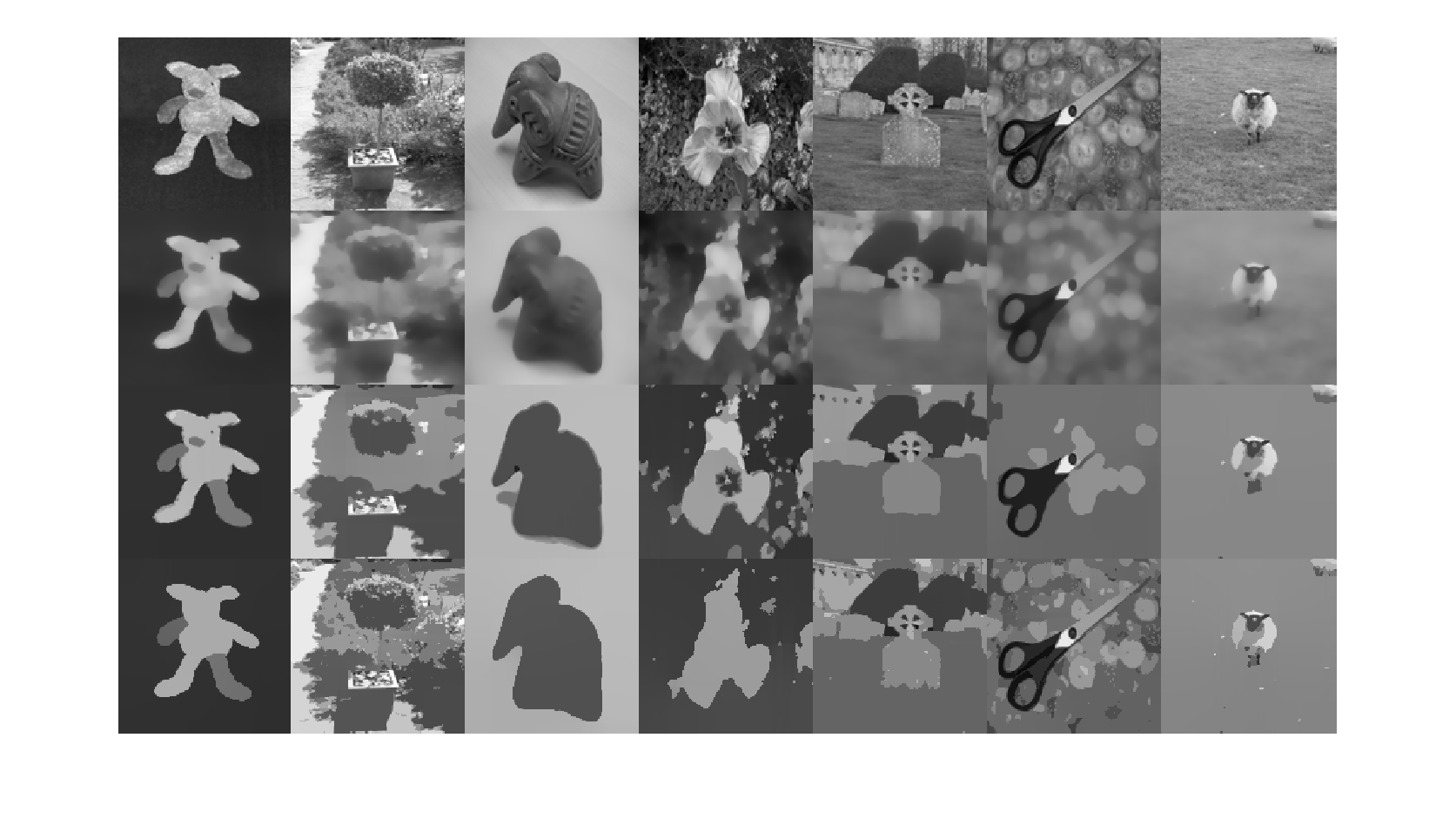}\\[4pt]
\includegraphics[width=\linewidth]{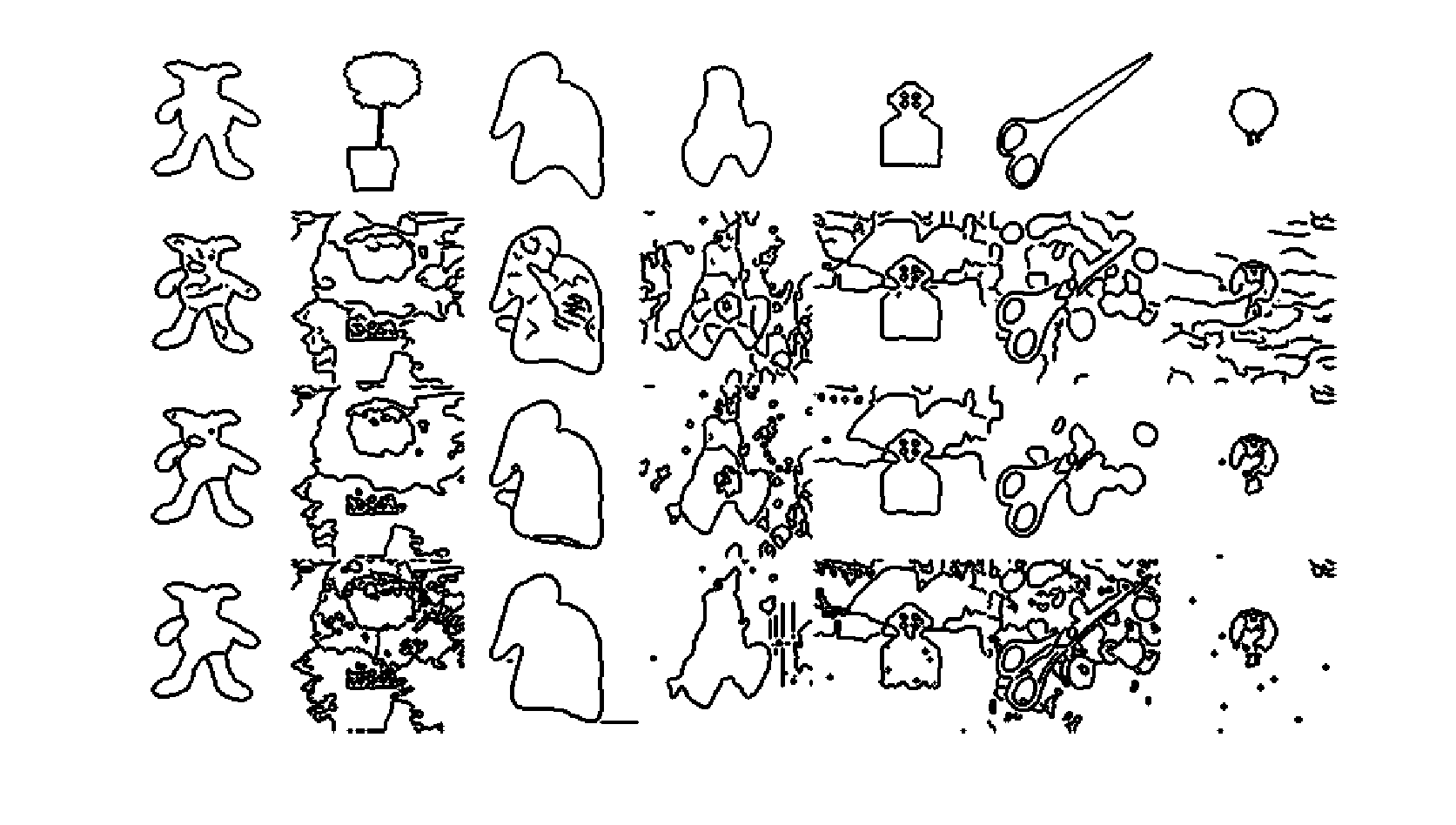}
\caption{The first block shows the filtered images and the second block illustrates their edge detections. Within each block, the first row shows the original images. The next rows illustrate the output of the image filtering with the cartoon variation image decomposition, the nonlinear diffusion filter and our proposal, respectively.}
\label{fig:comparative_filters}
\end{figure}

\begin{table}[H]
\caption{ A quantitative comparative of the image filtering with three approaches are reported using the F-measures. The first row shows the F-measure in the original images. The next rows illustrate the the F-measure in the cartoon variation image decomposition, the nonlinear diffusion filter and our proposal, respectively.}
\label{Table:comparative_filters}
\begin{center}
\begin{tabular}{|l|lllllll|} \hline
\multicolumn{8}{|c|}{{\bf Images}}  \\ \hline
 & Bear & Bush & Elephant & Flower &  Grave & Scissors & Sheep  \\ \hline
$F_0$ & 0.597 & 0.578 & 0.397 & 0.573 &0.606 & 0.704 & 0.623  \\ \hline
$F_{TV}$ & 0.793 & 0.472 & 0.538 & 0.621 &0.858 & 0.633 & 0.621  \\ \hline
$F_{DIL}$ & 0.888 & 0.460 & 0.808 & 0.699 &0.891 & 0.662 & 0.607  \\ \hline
$F(p^{*},1/2)$ & 0.918 & 0.554 & 0.852 & 0.913 & 0.832 & 0.693 & 0.699 \\ \hline
\end{tabular}
\end{center}
\end{table}



In this regard,
the proposed diffusivity was also tes\-ted with abdominal CT scans.
Obviously, the image domains were in 3D and according  to clinical protocols, the grids were not regular. We observed that the choice of $\gamma$ as global parameter was not a good one. Specifically, the weak edges were blurred and then some regions of interest were merged. To overcome these drawbacks, several strategies about a local $\gamma$ are  being studied. Our approach is to detect weak edges and sequently apply a local $\gamma$ that tends to zero for these areas. In the rest of the image, $\gamma$ could be global and calculated using $MAD$ function as we stated above. These experiments have validated the 3D extension of our filter, even with high values of $k$. Fig.~\ref{fig:liver} shows first preliminary results.
\begin{figure}[H]
\centering
\includegraphics[width=\linewidth, keepaspectratio]{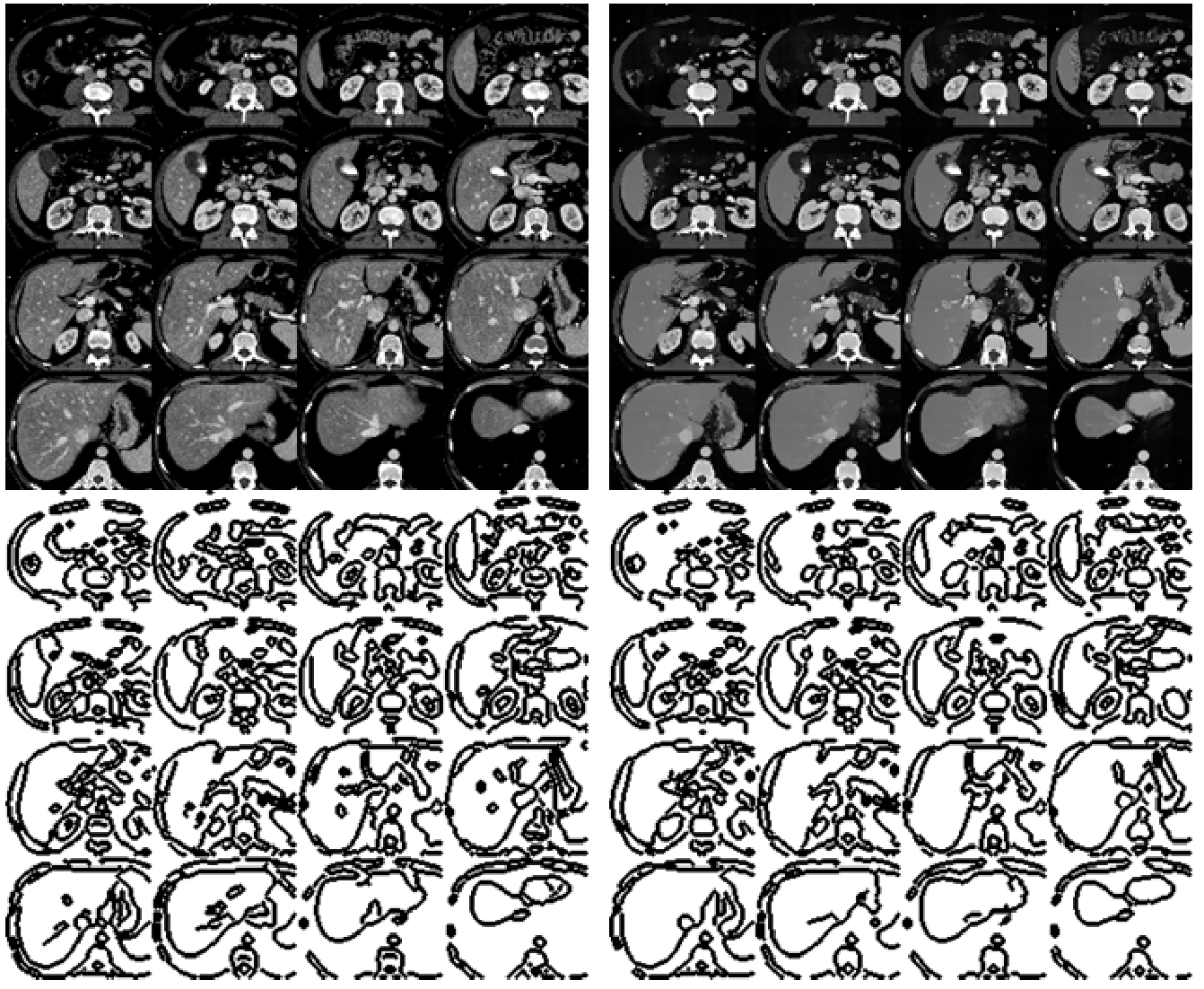}
\caption{\label{fig:liver} Some slices of the original CT and the corresponding filtered slices in the last iteration. The size of the image is $296\times357\times67$, pixel spacing is $0.76$ \emph{mm} and slice distance is $3$ \emph{mm}. The values are $p^{*}=10$, $k=8000$ and $n=15$.}
\end{figure}

\clearpage
\section{Discussion and Conclusion}
In the first part of this article, the intrinsic formulation for nonlinear diffusion equation was developed~\eqref{eq:intri_11}. This allowed us, firstly, to observe clearly the effect of the derivative of the diffusion function in the condition for the edge enhancement, and secondly to establish a theoretical framework for the design of the diffusivity function.
Because of this, diffusivity function~\eqref{eq:4} was proposed, capable of separating the initial image in piecewise constant regions using nonlinear diffusion techniques.

From the numerical point of view, it has been observed that with the proposed diffusivity~\eqref{eq:4}, the Newton's method was unstable, due to the matrix of iteration. Therefore, we have suggested to use
the tangential stiffness method given by~\eqref{eq:aprox_1}, which leads to the \emph{Picard} or \emph{direct} me\-thod \eqref{eq:iterativo}. In the case of the first iteration we obtain the \emph{semi-implicit method}. Thus, it is shown that with this numerical method we really perform forward diffusion instead of backward diffusion connecting with an edge-preserving process.

The diffusivity~\eqref{eq:4} has two parameters: $\gamma$ is related to the slopes at the edges of the regions, while $p$ is related with the speed of the forward diffusion. It also has interesting properties.
First, the value \-of $g\left(\|\nabla u\|^{2}\right) = 1$ for $\|\nabla u\|<\sqrt{\gamma}$ allows to use the \emph{setting time} $t_{s}$ as a stopping time.
Second, we have decided to set the parameter $\gamma$ by using the $MAD$ function in the different test images. So,
depending on the features of each image and using a set of training images, we can adjust the parameter $p$ as it was stated in the experimental part. This fact allows us to introduce prior knowledge for getting a piecewise constant image using the criterion: $p^{*}= \mbox{arg max} \{F(p,1/2): p >1 \}$. Thus, we get good results for images that have some contrast among the regions (see images in Fig.~\ref{fig:todas128k200}: bear, elephant, flower, grave, sheep).
Third, we emphasize that we only need to calculate once the \emph{setting time} $t_{s}$ for one image of the set and it can be used for the rest of the images if they have the same size and when $\vec{U}^{0}$ is similar. Fourth, the robustness of our diffusivity~\eqref{eq:4} leads  to the use of a large time step $k\equiv \Delta t$, therefore reducing the number of steps, as it can be observed in the liver image example (Fig~\ref{fig:liver}). This feature is an advantage over the functions of the type appearing in equation~\eqref{eq:condition_4}, for which it is necessary to use increments of time too small.
From the last two properties, it follows that the proposed diffusivity provides a nonlinear diffusion process for getting piecewise constant images with a low computational effort. This fact, combined with the no necessity to consider functional spaces to select the texture and the geometry of the image processing, give us an alternative over variational methods.
Finally, as previously mentioned, the di\-ffu\-si\-vi\-ty fun\-cti\-on was tested for 2D and 3D, with promising results.



It is worth noting that for images where the contrast is lower among different regions, the result is not conclusive (see above images in Fig.~\ref{fig:todas128k200}: scissors, bush). To improve outcomes in these cases we propose, as future work, to use a local value of $\gamma$. Fig.~\ref{fig:liver} shows some initial results of this approach.
Anyway,
these topics are not discussed in this article because its development and approach make the size of this item excessive. These approaches are  also postponed for future work.

Finally, a comparative study with other diffusivities \cite{RefWorks:30,RefWorks:157,chao2010improved,chao2010anisotropic} is currently under investigation.

\end{document}